\crefname{section}{Sec.}{Secs.}
\Crefname{section}{Section}{Sections}
\Crefname{table}{Table}{Tables}
\crefname{table}{Tab.}{Tabs.}
\definecolor{Gray}{gray}{0.9}
\newtheorem{assumption}{Assumption}
\def\eqref#1{equation~\ref{#1}}
\def\1{\bm{1}}
\DeclareMathAlphabet{\mathsfit}{\encodingdefault}{\sfdefault}{m}{sl}
\SetMathAlphabet{\mathsfit}{bold}{\encodingdefault}{\sfdefault}{bx}{n}
\newcommand\blfootnote[1]{%
  \begingroup
  \renewcommand\thefootnote{}\footnote{#1}%
  \addtocounter{footnote}{-1}%
  \endgroup
}
\begin{document}

\title{Beyond Finite Data: Towards Data-free Out-of-distribution Generalization via Extrapolation} 

\titlerunning{Data-free Generalization}

\author{Yijiang Li\inst{1} \and
Sucheng Ren\inst{1} \and 
Weipeng Deng\inst{2} \and Yuzhi Xu \inst{3} \and Ying Gao \inst{4} \and Edith Ngai \inst{2} \and Haohan Wang \inst{5}
}

\authorrunning{Y Li, S Ren W Deng et al.}

\institute{Johns Hopkins University \and
The University of Hong Kong \and New York University \and South China University of Technology \and University of Illinois Urbana-Champaign\\
}

\maketitle

\begin{abstract}
Out-of-distribution (OOD) generalization is a favorable yet challenging property for deep neural networks. The core challenges lie in the limited availability of source domains that help models learn an invariant representation from the spurious features. Various domain augmentation have been proposed but largely rely on interpolating existing domains and frequently face difficulties in creating truly ``novel'' domains. Humans, on the other hand, is capable of extrapolating novel domains, thus, an intriguing question arises: \textbf{How can neural networks extrapolate truly ``novel'' domains and achieve OOD generalization?}

We introduce a novel approach to domain extrapolation that leverages reasoning ability and the extensive knowledge encapsulated within large language models (LLMs) to synthesize entirely new domains. Starting with the class of interest, we query the LLMs to extract relevant knowledge for these novel domains. We then bridge the gap between the text-centric knowledge derived from LLMs and the pixel input space of the model using text-to-image generation techniques. By augmenting the training set of domain generalization datasets with high-fidelity, photo-realistic images of these new domains, we achieve significant improvements over all existing methods, as demonstrated in both single and multi-domain generalization across various benchmarks.

With the ability to extrapolate any domains for any class, our method has the potential to learn a generalized model for any task without any data. To illustrate, we put forth a much more difficult setting termed, \textbf{data-free domain generalization}, that aims to learn a generalized model in the absence of any collected data. Our empirical findings support the above argument and our methods exhibit commendable performance in this setting, approximating the supervised with synthetic data only and even surpassing the supervised setting by approximately 1-2\% on datasets such as VLCS.

\blfootnote{Preprint. Under review.}
\blfootnote{We will release our code upon acceptance.}
\blfootnote{Email: yli556@jhu.edu}

  \keywords{Out-of-distribution Generalization \and Domain Extrapolation \and Large Language Model}
\end{abstract}

\setlength{\floatsep}{5pt plus 2pt minus 2pt}
\setlength{\textfloatsep}{5pt plus 2pt minus 2pt}
\setlength{\intextsep}{5pt plus 2pt minus 2pt}
\section{Introduction}
\label{sec:intro}
Deep neural networks have demonstrated remarkable achievements in various fields and applications \cite{he2015deep, devlin2018bert, chen2021transunet, dosovitskiy2020image, li2021encoder}, yet their effectiveness heavily depends on the assumption that the training and testing environments are subject to independent and identically distributions \cite{ben2010theory, blanchard2011generalizing}.
Out-of-distribution (OOD) generalization aims to learn model from some training distribution that can generalize well to unseen testing domains, usually with distribution or label shifts \cite{liu2021towards}. A typical scenario is domain generalization (DG) where multiple source domains are available and these available source domains aid the training of generalizable models that learn invariant features and discard spurious ones. 
However, a significant challenge arises. The availability of these source domains often becomes a limiting factor, hindering the success of current OOD approaches in more challenging scenarios \cite{qiao2020learning, wang2021learning, xu2020robust, wang2022generalizing}, which can be attributed to the difficulty and high expenses to collect, not just, data but data in diverse domains with annotations, which is sometimes impossible in critical areas such as healthcare or extreme conditions (e.g. deep sea or space). 
Motivated by these challenges, domain augmentation is straightforward and multiple methods have been proposed to generate novel domains and images through mixup \cite{yan2020improve}, mixing of statistics \cite{zhou2021domain}, uncertainty modeling \cite{li2022uncertainty, zhou2023fedfa} and convex combination \cite{albuquerque2019adversarial}. However, these methods generally interpolate the existing training domains to generate novel domains that still fall within the convex hall of available domains \cite{albuquerque2019adversarial}. Consequently, the constrained number of source domains hampers the expressiveness of these methods, continuing to act as a performance bottleneck. On the other hand, Humans harness the innate ability of the human brain to create novel domains as illustrated in \cite{shu2023clipood, radford2021learning} where a pre-defined set of novel domains and styles are utilized. However, this also requires human labor which fails to scale to larger sizes. Naturally, an intriguing question arises: \textbf{How can neural networks extrapolate truly ``novel'' domains and achieve OOD generalization?}

Large language models (LLMs) \cite{brown2020language} have been shown to encapsulate a vast wealth of knowledge and simulate human cognitive processes. 
Thus, a pertinent question emerges: Can one harness the power of LLMs to produce novel domains and relevant knowledge, thereby replacing the human in the above training process? 
Stemming from this primary query, we investigate how we can extract knowledge of a specific task and produce novel domains from LLMs. 
A subsequent research question is: How can we leverage this text-centric knowledge from LLMs to instruct an image system that processes pixel input? 
State-of-the-art text-to-image generation models such as Imagen~\cite{saharia2022photorealistic}, Stable Diffusion~\cite{rombach2022high} and GLIDE \cite{nichol2021glide}  exhibit promosing capability to synthesize photo-realistic images positioning them as the optimal conduit between textual and visual realms. Finally, we seek to answer to what extent the synthesized images based on knowledge can serve as Out-of-distribution learners that can generalize to unseen testing domains. Following these problems, we are the first study to design a new paradigm that leverages the knowledge of LLMs to extrapolate novel domains for training better generalizable and sample-efficient models. With the ability to extrapolate any domains for any class, our method has the potential to learn a generalized model for any task without any existing data. 

In addition, we present \textbf{data-free domain generalization}. Data-free generalization endeavors to enable a model across unseen testing domains based solely on task specifications (for example, distinguishing between dog and cat classes) without the need for gathering or utilizing any pre-existing datasets. In the era of large foundation models, data-free domain generalization is formulated as OOD problem with inaccessible meta distribution and domain distribution (detailed in Section \ref{sec:theory}) -- essentially, devoid of any real-world data. This scenario presents a significantly more complex challenge than that encountered in multi-domain or single-domain generalization efforts. Moreover, it holds pragmatic significance in democratizing machine learning, by urging the community to develop methodologies that are viable under stringent resource constraints. Such an initiative paves the way for wider access to, and application of, machine learning technologies. For instance, our proposed method provides a viable solution by leveraging LLM as a knowledge base to extrapolate domains and scenarios domains and scenarios where specific classes may be represented. Then synthetic data is generated via a text-to-image generation model. These synthetic data only are used to train a model that can generalize well to the given task and fulfill its requirements. Our method not only addresses the challenge of data scarcity in DG problems but also underscores the potential of synthetic data in overcoming traditional barriers to machine learning implementation.

Extensive experiments on single, multi-domain and data-free evaluations demonstrate the effectiveness of our proposed method. In both single and multi-domain configurations, we demonstrate that synthetic data in the extrapolated novel domains markedly outperforms baseline results across various datasets. On the more challenging data-free setting, our proposed method exhibits near-supervised performance in this setting, even surpassing the supervised baseline by approximately 1-2\% on VLCS. Data synthesized via the knowledge from LLMs excels compared to the synthetic data directly generated from text-to-image generation models. This demonstrates the ability of LLMs to effectively extrapolate like humans and integrate this prior knowledge into the model.

We also underscore the scalability of our approach by highlighting that as the number of domains escalates, the performance correspondingly improves. Intriguingly, this trend diverges from the outcomes observed when augmenting with synthetic data directly produced by text-to-image models reported in \cite{azizi2023synthetic, he2022synthetic}. This further demonstrates the pivotal role of the knowledge derived from LLMs in mitigating overfitting to synthetic data.

The remainder of this paper is organized as follows: In Section~\ref{sec:method}, we will first motivate our method from the perspective of the theoretical error bound for out-of-distribution (OOD) generalization. Then we will detail our method design and specifications. 
Section~\ref{sec:data_free} introduces the data-free generalization and its potential usage in the era of large foundation models.
Section~\ref{sec:exp} describes our experiment design, results and the implications of our findings. 
Section~\ref{sec:related} introduces related work. 
Section~\ref{sec:conclusion} concludes our paper and  potential limitation of our work.

\section{Method}
\label{sec:method}
We motivate our method from the perspective of the theoretical error bound for out-of-distribution (OOD) generalization. We will first provide the notation for the theoretical framework. Then we motivate our research problem from the OOD generalization error bound, i.e. limited number of source domains leading to a larger error bound. Then we propose a proxy method that approximates the meta-distribution with a proxy distribution. We give a new error bound on this method. Lastly, we propose one realization of our method by using LLMs to approximate the meta-distribution and text-to-image generation models to bridge the text-centric knowledge with the input pixel space.
\subsection{Theoretical Bound}
\label{sec:theory}
\textbf{Notation.} Let $\mathcal{X}$ denote the observation space and $\mathcal{Y}=\{1, -1\}$ the output space. Denote $P_{XY}$ as the joint probability of the joint space of $\mathcal{X} \times \mathcal{Y}$ and assume a meta distribution $\mu$ and n domains $P^{(1)}_{XY}, \cdots, P^{(i)}_{XY}, P^{(n)}_{XY}$ are i.i.d realizations from $\mu$. A decision function is a function $f \in \mathcal{F}: \mathcal{X} \rightarrow \mathcal{Y}$ predicts $\hat y_i = f(x_i)$. We denote $\mathit{l}: \mathcal{Y}\times \mathcal{Y}\rightarrow \mathbb{R}_{+}$ a loss function and define the generalization error of a decision function as
\begin{align}
    \small \mathcal{L}^\mu(f) = \mathbb{E}_{P_{XY} \sim \mu}\mathbb{E}_{(x, y)\sim P_{XY}}[\mathit{l}(f(x), y)]
    \label{eq:objective}
\end{align}
Since we have no access to $\mu$ and all the realizations $P^{(1)}_{XY}, \cdots, P^{(i)}_{XY}, P^{(n)}_{XY}$ but sampled images from these realizations, we can derive an empirical error:
\begin{align}
    \small \hat{\mathcal{L}}^\mu (f) = \sum_{i=1}^n \sum_{j=1}^m \mathit{l}(f(x_{ij}, y_{ij})
\end{align}
where $(x_{ij}, y_{ij}) \sim P^{(j)}_{XY}$ denotes the $i$th sample drawn from $P^{(j)}_{XY}$.
It's easy to see that when $n \rightarrow \infty, m \rightarrow \infty $, $\hat{\mathcal{L}}^\mu (f) $ converges to $\mathcal{L}^\mu(f)$, which gives the intuitive sense that increasing $m$ and $n$ gives us better-approximated solutions. 
This motivates us to increase $n$ and $m$, i.e.increasing the number of domains and training images per domain, which is difficult due to the inaccessible $\mu$ and $P^{(1)}_{XY}, \cdots, P^{(i)}_{XY}, P^{(n)}_{XY}$. Prior arts have proposed various methods to generate novel domains but the majority falls in the interpolation of existing domains, failing to effectively increase $n$. 
How can to approach this problem? \textbf{We can approximate $\mu$ by new distribution $\mu'$ sufficiently close to $\mu$ that can be sampled.}

 \begin{definition}
 We define the distance between the two distributions as 
     \begin{align*}\small D(\mu, \mu') = \sup_{f\in \mathcal{F}}| \mathcal{L}^{\mu'} (f) - \mathcal{L}^\mu (f)  |
     \end{align*}
 \end{definition}
With the following assumption,
 \begin{assumption}
     We assume the distance $D(\mu, \mu') \le \epsilon$.
 \end{assumption}
 we can derive a bound through the approximated $\mu'$.
 \begin{theorem}
 \label{theorem1}
     With confidence at least $1 - 2\delta$ and for all $f \in \mathcal{F}$, we have
     \begin{align*}
         \small \mathcal{L}^\mu(f) \le \hat{\mathcal{L}}^{\mu'} (f) + 2\mathcal{R}_{mn}(\mathcal{F}) + 2\mathcal{R}_{n}(\mathcal{F}) + 3\sqrt{\frac{\ln (2/\delta)}{2mn}} + 3\sqrt{\frac{\ln (2/\delta)}{n}} + \epsilon
     \end{align*}
 \end{theorem}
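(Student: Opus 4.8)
The plan is to chain the assumption with a two-level uniform-convergence argument, one level for the finite sample of domains and one for the finite sample of points within each domain. First I would dispose of the meta-distribution mismatch: by the definition of $D(\mu,\mu')$ together with the assumption $D(\mu,\mu')\le\epsilon$, every $f\in\mathcal{F}$ obeys $\mathcal{L}^\mu(f)\le\mathcal{L}^{\mu'}(f)+\epsilon$, so it suffices to control $\mathcal{L}^{\mu'}(f)-\hat{\mathcal{L}}^{\mu'}(f)$ uniformly over $f$. The natural device is to insert the average of the per-domain population risks $\bar{\mathcal{L}}(f)=\tfrac1n\sum_{j=1}^n\mathbb{E}_{(x,y)\sim P^{(j)}_{XY}}[\mathit{l}(f(x),y)]$ and split the gap as
\[
\mathcal{L}^{\mu'}(f)-\hat{\mathcal{L}}^{\mu'}(f)=\underbrace{\big(\mathcal{L}^{\mu'}(f)-\bar{\mathcal{L}}(f)\big)}_{\text{domain level}}+\underbrace{\big(\bar{\mathcal{L}}(f)-\hat{\mathcal{L}}^{\mu'}(f)\big)}_{\text{sample level}}.
\]

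For the sample-level term I would condition on the drawn domains $P^{(1)}_{XY},\dots,P^{(n)}_{XY}$. Given these, the $mn$ labelled examples are independent and the (bounded) loss perturbs $\hat{\mathcal{L}}^{\mu'}$ by at most $O(1/mn)$ per example, so the standard symmetrization-plus-McDiarmid recipe gives, with probability at least $1-\delta$, $\sup_{f}\big(\bar{\mathcal{L}}(f)-\hat{\mathcal{L}}^{\mu'}(f)\big)\le 2\mathcal{R}_{mn}(\mathcal{F})+3\sqrt{\ln(2/\delta)/(2mn)}$. Here the constant $3$ and the $\ln(2/\delta)$ are exactly what the empirical-Rademacher form of the bound produces, the extra square-root term coming from the second McDiarmid step needed to replace the expected complexity by $\mathcal{R}_{mn}(\mathcal{F})$.

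For the domain-level term I would regard each sampled domain as a single i.i.d.\ draw from $\mu'$ and regard the functional $P\mapsto\mathcal{L}^{P}(f)=\mathbb{E}_{(x,y)\sim P}[\mathit{l}(f(x),y)]$ as the bounded quantity being averaged; the induced class $\{P\mapsto\mathcal{L}^{P}(f):f\in\mathcal{F}\}$ inherits boundedness, so the same recipe over the $n$ independent domains yields, with probability at least $1-\delta$, $\sup_{f}\big(\mathcal{L}^{\mu'}(f)-\bar{\mathcal{L}}(f)\big)\le 2\mathcal{R}_{n}(\mathcal{F})+3\sqrt{\ln(2/\delta)/n}$. A union bound over these two events makes the total failure probability at most $2\delta$ (confidence $1-2\delta$), and adding back the $\epsilon$ from the first step assembles precisely the stated inequality.

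The hard part will be making the two-level argument fully rigorous in the presence of coupling: perturbing one domain in the domain-level bounded-difference check simultaneously changes that domain's distribution \emph{and} the $m$ examples attached to it, so I must verify the correct bounded-difference constant at the domain level and justify the conditioning order cleanly (condition on the domains for the sample-level event, then integrate out). This is also where I expect the scaling $\ln(2/\delta)/n$ rather than $\ln(2/\delta)/(2n)$ in the last square-root to originate, the coupling plausibly costing the extra factor. A secondary care point is pinning down the exact meaning of $\mathcal{R}_{mn}(\mathcal{F})$ and $\mathcal{R}_{n}(\mathcal{F})$ (loss-composed, empirical versus expected) so that the constants $2$ and $3$ are exactly those the symmetrization lemma delivers at each level.
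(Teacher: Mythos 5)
Your proposal follows essentially the same route as the paper: bound $\mathcal{L}^\mu(f)-\mathcal{L}^{\mu'}(f)$ by $\epsilon$ via the assumption on $D(\mu,\mu')$, then control $\mathcal{L}^{\mu'}(f)-\hat{\mathcal{L}}^{\mu'}(f)$ by the same two-level (domain-level plus sample-level) symmetrization-and-McDiarmid argument through the intermediate quantity $\frac{1}{n}\sum_j\mathcal{L}_{P^{(j)}}(f)$, with a union bound giving confidence $1-2\delta$. One small note: the $\sqrt{\ln(2/\delta)/n}$ term does not arise from any coupling cost --- the paper's own domain-level derivation actually yields the tighter $\sqrt{\ln(2/\delta)/(2n)}$ and the stated bound is simply a looser rewriting of it.
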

 Proof in Appendix \ref{appendix:proof}.
 By replacing $\mu$ with $\mu'$, we now have control over $\hat{\mathcal{L}}^{\mu'} (f)$, $m$ and $n$ as we can sample as many domains and images from $\mu'$ as possible. This is obtained at the cost of $\epsilon$, which we assume to be small.
 \begin{remark}
     We also note that as $n$ and $m$ increase, the upper bound of the generalization error decreases, which gives us better generalization errors.
 \end{remark}
With sufficiently large $n$ and $m$, the decrease part of the generalization error will cancel out the cost of $\epsilon$, leading to a lower generalization error.

\begin{figure}[htbp] 
\centering
\includegraphics[width=0.9\textwidth]{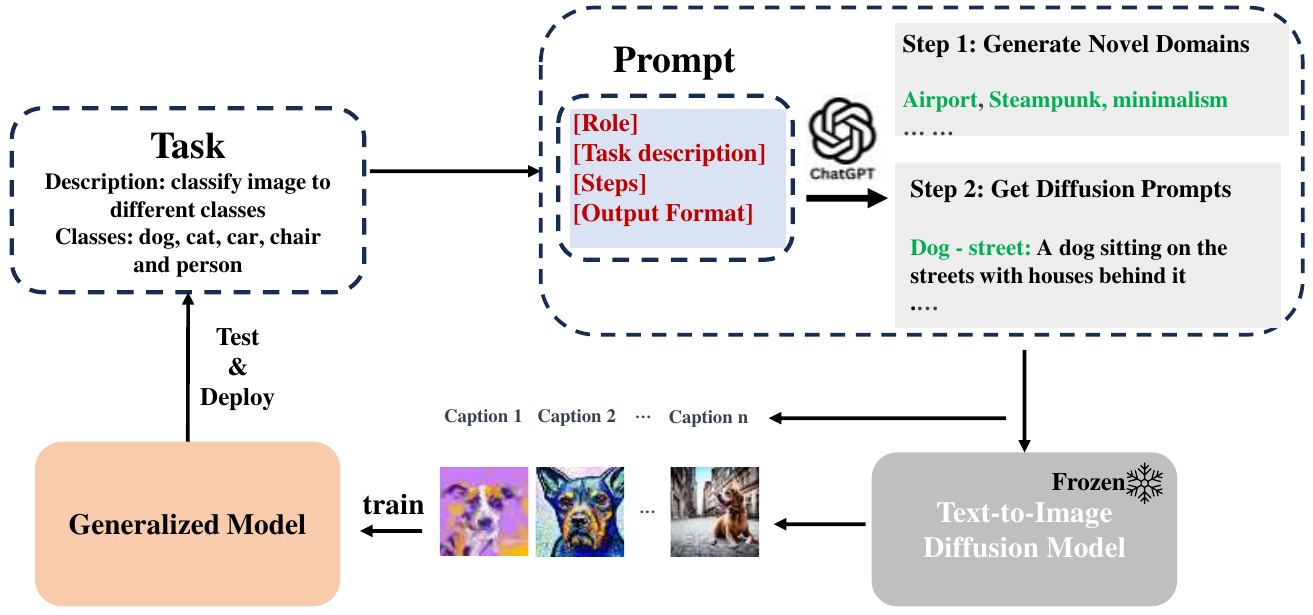}
\caption{Overall pipeline of our paradigm: \textit{Extrapolation of novel domains via the knowledge of LLMs}, a novel learning paradigm where knowledge from LLMs assists the training of generalizable models via text-to-image models in a completely data-free fashion.}
\label{fig:overall} 
\end{figure}

 \subsection{Domain Extrapolation with LLMs}
 Given the aforementioned theoretical bound, our objective is to approximate $\mu$ with $\mu'$. Humans, as evidenced in \cite{shu2023clipood, radford2021learning}, usually can efficiently extrapolate novel domains (by imagination), which is a good approximation of $\mu$. Nonetheless, human intervention is expensive and not scalable to arbitrary datasets. Conversely, LLMs not only embody a vast expanse of knowledge~\cite{petroni-etal-2019-language} and exhibit comparable reasoning capabilities~\cite{qiao-etal-2023-reasoning}, but they also present the benefit of being amenable to extensive sampling. To this end, we propose to query LLMs, in place of human, to extrapolate novel domains.
 
 After sampling from meta distribution $\mu'$, we need to further sample from the domain distribution to generate images in this particular novel domain. As discussed in Section \ref{sec:intro}, this leads to a gap between the text-based knowledge output by the LLMs and the input pixel space of vision systems. Text-to-image generation models (e.g. stable diffusion \cite{Rombach_2022_CVPR}) exhibit the great capability to output photo-realistic images through inputting texts positioning them as the optimal bridge between textual and visual realms. The synthetic images of extrapolated novel domains are used to augment the original dataset or train the models solely in a data-free fashion. An overall illustration of our paradigm can be seen in Figure \ref{fig:overall}.
 
 \textbf{Extracting Knowledge from LLMs.} The objective is to approximate $\mu$ via LLMs as close as possible. This introduces a constraint whereby the generated novel domains must reside within the high-density regions of distribution $\mu$. To ensure adherence to this criterion, we purposefully instruct the LLMs to conceive the most plausible and reasonable domains where a particular class would realistically exist.  To better guide LLMs to understand the instruction and generate the required response accordingly, we craft system prompts that include role description ([Role]) and task description ([Task Description]), as illustrated by the example in Figure \ref{fig:prompt}. Numerous strategies exist to solicit knowledge and novel domains from LLMs.
 \begin{itemize}
     \item \textbf{Dataset-wise query.} The most direct approach entails querying the LLMs with comprehensive dataset information (i.e. all of the class names) and instructing the model to produce $n$ novel domains. However, as the marginal distribution for each class might exhibit minimal overlap (worse when the number of classes grows), it becomes considerably intricate to sample novel domains that are both plausible and likely for all classes.
     \item \textbf{Class-wise query.} Thus, we propose to query the LLMs for novel domains of specific classes. For each class in the task, we query the LLMs for knowledge and $n$ novel domain information specific to that class. We repeat the process one class after another until all of the classes are iterated. We provide a example prompt in Figure \ref{fig:prompt}.
 \end{itemize}
 
\begin{figure}[htbp] 
\centering
\includegraphics[width=0.9\textwidth]{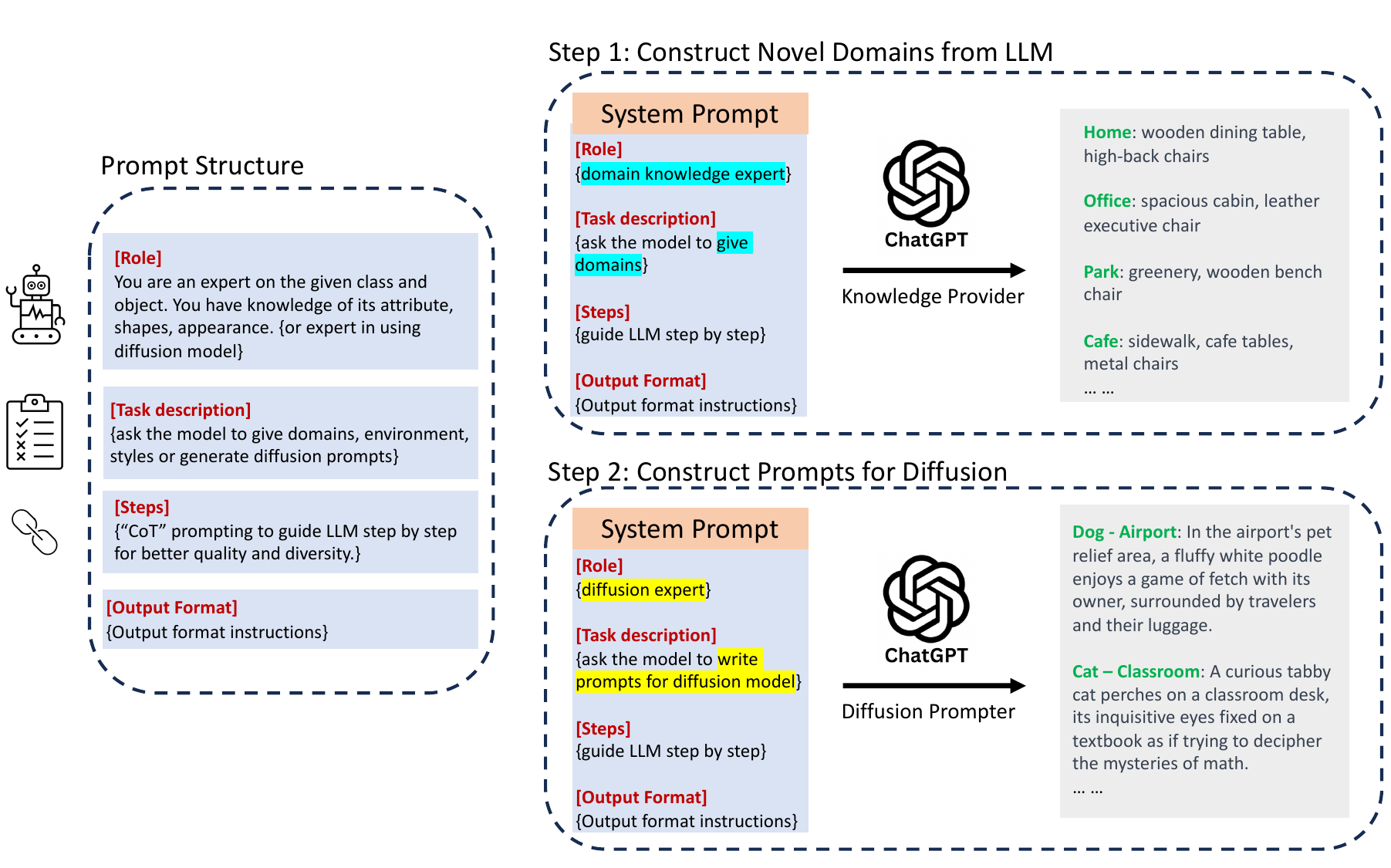}
\caption{Knowledge extraction pipeline. We first employ various SOTA prompting methods: e.g. "Chain of Thought~\cite{NEURIPS2022_9d560961}" (CoT) prompting, role prompting to extract domains from LLM (Step 1) and automatically generate prompt for a Text-to-Image model. (Step 2)}
\label{fig:prompt} 

\end{figure}
 \textbf{Bridging text and pixel with text-to-image generation models.} After obtaining a number of the most plausible and reasonable domains of a specific class, we transform the text-centric knowledge from LLMs to pixel space by text-to-image generation models. This process is exactly the realization of sampling $X$ from $P_X^{(i)}$ where $P_X^{(i)}$ is the $i$th domain generated by $\mu'$ (i.e. the LLM). Numerous strategies exist to prompt text-to-image generation models conditioned on class and domain information.
  \begin{itemize}
     \item Template prompt. 
The most immediate strategy involves employing templates as prompts (e.g., "an image of [CLASS\_NAME] in the domain of [DOMAIN\_NAME]"). However, the limitation lies in its lack of diversity: utilizing the identical prompt to produce multiple images results in images bearing resemblance to one another.
     \item LLM generated prompt. Thus, we propose to query the LLMs for prompts conditioned on the class name and domain information acquired in the previous step. As illustrated in Figure \ref{fig:prompt}, we craft system prompts that specifically tailor the LLM to generate prompts for text-to-image generation models and generate multiple prompts for each of the novel domains of each class.
 \end{itemize}


 \section{Data-free Domain Generalization} 
 \label{sec:data_free}
 We present Data-free Generalization, a new formation of generalization in the era of large foundation models. Given a task with detailed description and requirements (e.g. the classes to be classified and the definition of each class), Data-free Generalization endeavors to learn a model that can generalize to this specific task and fulfill the requirement without collecting any data or utilizing any existing datasets. Formally, this problem is formulated as follows. Task description and requirements specify the decision function $f \in \mathcal{F}: \mathcal{X} \rightarrow \mathcal{Y}$ and the meta distribution $\mu$. The problem then turns to minimizing Equation \ref{eq:objective}, as detailed in Section \ref{sec:theory}. The difference is that now the meta distribution $\mu$ cannot be sampled and thus we have no access to any training domains $P^{(1)}_{XY}, \cdots, P^{(i)}_{XY}, P^{(n)}_{XY}$ or images that are sampled from these domains. However, in the era of large foundation models, the meta distribution $\mu$ can be approximated by LLMs while the domain distribution can be approximated by image generation models. Consequently, we can provide a guarantee on the learning with Theorem \ref{theorem1}. We provide one such method in Section \ref{sec:method}. 

Data-free generalization can not only serve a more difficult setting to push the limits of current OOD methods but also holds pragmatic significance in democratizing machine learning. It does so by mitigating or potentially eliminating the necessity for data collection and annotation within the machine learning pipeline, which facilitates a broader access to and application of machine learning technologies, particularly for entities facing resource constraints.

Envision a modest-sized enterprise incapable of investing in the training of large foundational models, nor possessing the necessary time and funding to collect and label an extensive dataset for particular tasks. This situation aligns with the concept of Data-free Generalization, characterized by the availability of only task specifications in the absence of accessible data. Our methodology offers an ideal resolution for such organizations. Initially, they can leverage LLMs' APIs for a limited number of queries to derive extrapolated domains and scenarios. Following this, they may engage text-to-image models for data synthesis. This synthetic data can then be utilized to either develop new models or enhance existing ones, thereby circumventing the limitations posed by resource constraints.

\section{Experiments}
\label{sec:exp}
The objective of our experiments is to (i) demonstrate that knowledge from LLMs successfully extrapolates novel domains and leads to performance benefits grounded by theoretical bounds. (ii) Investigate the most efficient and effective approach for extracting knowledge and sampling from text-to-image models. (iii) Analyze to what extent the synthetic images generated condition on LLMs' knowledge can serve as good out-of-distribution learners that lead to generalization on unseen testing domains.

\subsection{Experiment Setup}

\textbf{Datasets.} We evaluate generalization to domain shift using four multi-domain datasets in DomainBed \cite{gulrajani2020search}, namely PACS, VLCS, OfficeHome and DomainNet. We follow the train-validate-test split of each dataset as in \cite{gulrajani2020search} and use the training-domain validation set to perform the hyperparameter search. 

\textbf{Evaluation.} To comprehensively evaluate our method, We experiment on both the leave-one-out evaluation protocol and single-domain generalization protocol. In addition, we propose the data-free domain generalization to evaluate whether it is possible to train a generalizable model in a data-free fashion with only task information, the knowledge from LLMs and text-to-image models that bridge the text space to pixel space.

\textbf{Baseline.} We set two baselines for our experiments, namely empirical risk minimization (ERM) and ERM with exponential moving average (ERM + EMA). ERM with exponential moving average is demonstrated to be more stable and effective than ERM \cite{arpit2022ensemble}. It is thus adopted to perform ablation study and analysis since its stable performance and more correlated to the validation accuracy \cite{arpit2022ensemble}.

\textbf{Implementation.} All experiments use ResNet50 pretrained on ImageNet1k as the image encoder unless otherwise stated. We remove the dropout and follow the rest of the implementation as in \cite{gulrajani2020search} since dropout is reported to have a negative impact on some of the DG methods \cite{huang2022two}, e.g. RSC \cite{huang2020self}. We adopt GPT-4 to extract novel domain knowledge and leverage Stable Diffusion 2 \cite{rombach2021highresolution} as the text-to-image generation model. We use one A100 GPU to generate synthetic images. All experiments of training ResNet50 and CLIP ViT-B16 model can be run on 1 RTX3090 GPU.

\begin{table}[h]
    \centering
    \caption{Leave-one-out Evaluation on DomainBed Benchmark. CLIP adopts ViT-B16 as the backbone. $\dag$ denotes reproduced results. MixStyle result is taken from \cite{cha2021swad}}
    \setlength{\tabcolsep}{2.5mm}{
    \scalebox{1.0}{
    \begin{tabular}{lccccc}
    \toprule
    \textbf{Algorithm}   &  \textbf{VLCS}            & \textbf{PACS}          & \textbf{OfficeHome}           & \textbf{DomainNet}           & \textbf{Avg}         \\
    \midrule
    
    ERM \cite{vapnik1998statistical}  & 77.5 $\pm$ 0.4            & 85.5 $\pm$ 0.2            & 66.5 $\pm$ 0.3                    & 40.9 $\pm$ 0.1 & 67.6   \\
    IRM \cite{arjovsky2019invariant}   & 78.5 $\pm$ 0.5   & 83.5 $\pm$ 0.8     & 64.3 $\pm$ 2.2     & 33.9 $\pm$ 2.8   & 65.1          \\
GroupDRO \cite{sagawa2019distributionally}   & 76.7 $\pm$ 0.6     & 84.4 $\pm$ 0.8    & 66.0 $\pm$ 0.7   & 33.3 $\pm$ 0.2   & 65.1        \\
MLDG \cite{li2018learning}        & 77.2 $\pm$ 0.4   & 84.9 $\pm$ 1.0     & 66.8 $\pm$ 0.6    & 41.2 $\pm$ 0.1   & 67.5                     \\
CORAL \cite{sun2016deep}     & 78.8 $\pm$ 0.6      & 86.2 $\pm$ 0.3       & 68.7 $\pm$ 0.3       & 41.5 $\pm$ 0.1     & 68.8          \\

    Mixup   \cite{xu2020adversarial, yan2020improve, wang2020heterogeneous}                   & 78.1 $\pm$ 0.3            & 86.8 $\pm$ 0.3            & 68.0 $\pm$ 0.2                    & 39.6 $\pm$ 0.1 & 68.1   \\
    MMD \cite{li2018domain}                      & 77.9 $\pm$ 0.1            & 87.2 $\pm$ 0.1            & 66.2 $\pm$ 0.3                     & 23.5 $\pm$ 9.4 & 63.7 \\
    RSC    \cite{huang2020self}                   & 77.8 $\pm$ 0.6             & 86.2 $\pm$ 0.5            & 66.5 $\pm$ 0.6                       & 38.9 $\pm$ 0.6 & 67.4 \\
    VREx  \cite{krueger2021out}              & 78.1 $\pm$ 0.2            & 87.2 $\pm$ 0.6            & 65.7 $\pm$ 0.3                  & 30.1 $\pm$ 3.7 &  65.3  \\
    SWAD \cite{cha2021swad} & 79.1 $\pm$ 0.4 & 88.1 $\pm$ 0.4 & 70.6 $\pm$ 0.3 & 46.5 $\pm$ 0.2 & 66.9 \\
    MIRO \cite{cha2022domain} & 79.0 $\pm$ 0.2 & 85.4 $\pm$ 0.4 & 70.5 $\pm$ 0.4 & 44.3 $\pm$ 0.2  & 65.9\\
    MixStyle \cite{zhou2021domain} & 77.9 & 85.2 & 60.4 & 34.0 & 64.4 \\
    \hdashline
    ERM $^\dag$ \cite{vapnik1998statistical}                  &    77.2 $\pm$ 1.0   & 84.4 $\pm$ 0.8       &   64.8 $\pm$ 0.4     &  43.6 $\pm$ 0.1      &           67.5       \\
    \rowcolor{Gray} + ours     &  78.5 $\pm$ 0.4    & 88.0 $\pm$ 0.3     &    70.0 $\pm$ 0.1   &    45.2 $\pm$ 0.1   &      70.4           \\
    \rowcolor{Gray} $\Delta$ &   + 1.3
    & + 3.6      &  + 5.2       & + 1.6   &      + 2.9          \\
    \hdashline
    ERM + EMA                 &  78.8 $\pm$ 0.6      &    87.8 $\pm$ 0.3   &   70.5 $\pm$ 0.1     &   46.0 $\pm$  0.1     &        70.8        \\
    \rowcolor{Gray} + ours                  &   80.2 $\pm$ 0.3
    & 90.3 $\pm$ 0.4       &  74.6 $\pm$ 0.2      &  47.5  $\pm$ 0.3 &    73.2            \\
    \rowcolor{Gray} $\Delta$ &   + 1.4
    & + 2.5       &  + 4.1    &  + 1.5  &       + 2.4         \\
    \hdashline
    CLIP Zero-shot & 80.1 & 96.2 & 83.0 &  58.5 & 79.5 \\
    CLIP Finetune & 82.4 $\pm$ 0.1 & 95.3 $\pm$ 0.2 & 84.8 $\pm$ 0.1 & 59.9 $\pm$ 0.1  & 80.6 \\
    \rowcolor{Gray} + ours & 82.7 $\pm$ 0.3  & 96.5 $\pm$ 0.3 & 86.5 $\pm$ 0.2  & 61.3  $\pm$ 0.0 & 81.8 \\
    \rowcolor{Gray} $\Delta$ &   + 0.3
    & + 1.2  &  + 1.7       &  + 1.4   & + 1.2 \\
    \bottomrule
    \end{tabular}}}
    \label{tab:multi-domain}
\end{table}

\subsection{Main Results}
We perform two existing evaluations on the four datasets in DomainBed benchmarks. Additionally, we propose a more challenging evaluation to further investigate to the synthetic images generated condition on LLMs' knowledge can serve as good representation learners.

\textbf{Leave-one-out evaluation.} Leave-one-out Evaluation leaves one domain as the testing domain and uses the rest as training domains. For our method, all of the synthetic images are treated as an additional domain to the source domains. As per Table \ref{tab:multi-domain}, augmenting with the novel domain synthetic images leads to a consistent improvement (as large as 5.2\%) over the ERM and ERM + EMA baselines. On average, we achieve a 2.9\% and 2.4\% improvement over ERM and ERM + EMA baselines respectively. Our method also achieved a significant improvement (1.2\% on average) over the CLIP fine-tuned baseline. This improvement is remarkable, given the already high performance of the CLIP model.
\begin{table}[h]
    \centering
    \caption{Single-domain Evaluation on DomainBed Benchmark. CLIP adopts ViT-B16 as the backbone.}
    \setlength{\tabcolsep}{2.5mm}{
    \scalebox{0.9}{
    \begin{tabular}{lcccc}
    \toprule
    \textbf{Algorithm}   &  \textbf{VLCS}            & \textbf{PACS}          & \textbf{OfficeHome}             & \textbf{Avg}         \\
    \midrule
    ASA \cite{fan2021adversarially}  & - & 67.0 & - &    -   \\
    Pro-RandConv \cite{choi2023progressive} & - & 67.0 & - &    -   \\ 
    CPerb \cite{zhao2023novel}  & - & 73.3 & - &    -    \\
    RSC  \cite{huang2020self}                & 59.2 $\pm$ 0.7 & 60.9 $\pm$ 1.7  & 46.9 $\pm$ 1.7        &     55.7     \\
\hdashline
ERM (Multi-domain) &    77.2 $\pm$ 1.0   & 84.4 $\pm$ 0.8       &   64.8 $\pm$ 0.4 & 
75.5 \\
ERM  \cite{vapnik1998statistical}                & 59.2 $\pm$ 0.8 & 64.6 $\pm$ 0.6 & 51.5 $\pm$ 0.3       &    58.4      \\
\rowcolor{Gray} + ours                  &   76.3 $\pm$ 0.2  &  83.9 $\pm$ 0.9     &  64.7 $\pm$ 0.2        &       75.0        \\
\rowcolor{Gray} $\Delta$ & + 17.1  
    &  + 19.3    &   + 13.2      &  + 16.5\\
\hdashline
ERM + EMA (Multi-domain) &  78.8 $\pm$ 0.6      &    87.8 $\pm$ 0.3   &   70.5 $\pm$ 0.1  & 79.0\\
ERM + EMA                  & 64.2 $\pm$ 0.7 & 67.9 $\pm$ 1.1   & 58.2 $\pm$ 0.1      &      62.7            \\
\rowcolor{Gray} + ours                  &    78.0 $\pm$ 0.1    &   87.6 $\pm$ 0.6    &    69.4 $\pm$ 0.3      &      78.3          \\
\rowcolor{Gray} $\Delta$ & +13.1  
    &  +21.7    &    +12.0      & +15.6 \\
    \bottomrule
    \end{tabular}}}
    \label{tab:single-domain}
\end{table}

\textbf{Single Domain Generalization.} Single-domain generalization Evaluation leverages a single domain for training and subsequently assesses the outcomes on the remaining domains. This scenario presents a greater challenge when juxtaposed with the Leave-one-out setting due to the model's exclusive exposure to just one domain during its training phase. Such a setting accentuates the issue of restricted availability of source domains. Considering our methodology does not impose assumptions on either the source domains or the model, but instead extrapolates novel domains via LLMs to augment the training set, it is optimally more suited for this specific context. Empirical evidence underscores its exceptional efficacy and with merely one source domain of real images, our results closely mirror, and at times even surpass, those obtained in a multi-domain configuration, as per Table \ref{tab:single-domain}. Specifically, we achieve the highest of 78.0\%, 87.6\%, 69.4\% on the three datasets, outperforming the ERM with multiple source domains by margins of 0.8\%, 3.2\% and 4.6\% respectively. Compared to baselines, our method achieves a remarkable improvement of over 10\% across all datasets and baselines.  This evidences that our methodology substantially mitigates the challenges associated with restricted source domains, rendering it particularly optimal and effective in scenarios where source domains are unavailable, such as single-domain generalization.

\begin{table}[h]
    \centering
    \caption{Comparison with two baselines and current SOTA augmentation-based DG methods. All models are equipped with EMA for fair comparison.}
    \setlength{\tabcolsep}{3mm}{
    \scalebox{1.0}{
    \begin{tabular}{lccc}
\toprule
\textbf{Algorithm}   & \textbf{VLCS}           & \textbf{PACS}      & \textbf{Avg}         \\
\midrule
MixStyle  \cite{zhou2021domain}      & 78.7 $\pm$ 0.1  &   87.7 $\pm$ 0.1   &  83.2        \\
DSU  \cite{li2022uncertainty}           & 77.7 $\pm$ 0.0 & 87.6 $\pm$ 0.2 & 82.7             \\
AutoAug \cite{cubuk2018autoaugment}         & 78.6 $\pm$ 0.3 & \underline{88.6 $\pm$ 0.1} & 83.6 \\
RandAug \cite{cubuk2020randaugment}         & 79.1 $\pm$ 0.0  & 87.5 $\pm$ 0.3 & 83.3   \\
\midrule
ERM + EMA                   & 78.8$ \pm$ 0.6 &  87.8 $\pm$ 0.3 & 83.3 \\
+larger batch-size          & 78.1 $\pm$ 0.1 & 87.4 $\pm$ 0.1 & 82.7 \\
+ class-template  & \underline{79.3 $\pm$ 0.1} & 88.0 $\pm$ 0.3 & 83.7\\
+ class-prompt  & \underline{79.3 $\pm$ 0.0} & 88.5 $\pm$ 0.2 & \underline{83.9} \\
+ ours                       & \textbf{80.2 $\pm$ 0.3}       & \textbf{90.3 $\pm$ 0.4}  & \textbf{85.3}        \\
\bottomrule
\end{tabular}}}
\label{tab:comparison_aug}
\end{table}

\textbf{Comparison with augmentation-based DG methods.} We compared with SOTA augmentation methods in Table \ref{tab:comparison_aug} including MixStyle \cite{zhou2021domain}, DSU \cite{li2022uncertainty}, AutoAug \cite{cubuk2018autoaugment} and RandAug \cite{cubuk2020randaugment}, where our method demonstrates an improvement of more than 2\% on average. 

\subsection{Data-free Generalization.} 
Data-free Generalization Evaluation serves as a more difficult setting to evaluate our proposed methods. Data-free Generalization aims to generalize to unseen testing domains with only knowledge of the task, i.e. the classes and definition of each class are available and no available data of any kind. To simulate Data-free Generalization with existing benchmarks, we use all the domains in existing DG datasets as testing domains. To evaluate our method, we directly train models on the synthetic images generated conditioned on novel domain knowledge. Then the model is tested on all the available real images of the domains for evaluation. Results are illustrated in Table \ref{tab:data-free} where we achieve the highest performance of 79.9\%, 86.9\%, 67.4\% with only less than 1\% gap between its multi-domain counterparts and largely surpasses single-domain counterparts. Notably, data-free ERM + EMA presents an accuracy of 79.9\% on VLCS outperforming the multi-domain supervised counterparts by more than 1\%. With the knowledge injected and novel domain extrapolated, this empirical result illustrates the promise of achieving generalization in a completely data-free fashion free of laborious data collection and annotation.
\begin{table}[h]
    \centering
    \caption{Data-free generalization on DomainBed Benchmark.}
    \setlength{\tabcolsep}{2mm}{
    \scalebox{1.0}{
    \begin{tabular}{lccccc}
    \toprule
    \textbf{Algorithm}   &  \textbf{VLCS}            & \textbf{PACS}          & \textbf{OfficeHome}           & \textbf{DomainNet}           & \textbf{Avg}         \\
    \midrule
    \multicolumn{6}{c}{ERM} \\
    \midrule
    Multi-domain                &    77.2 $\pm$ 1.0   & 84.4 $\pm$ 0.8       &   64.8 $\pm$ 0.4     &  43.6 $\pm$ 0.1      &           67.5       \\
    Single-domain                 & 59.2 $\pm$ 0.8 & 64.6 $\pm$ 0.6 & 51.5 $\pm$ 0.3   & -   &    -      \\
    \rowcolor{Gray} Data-free (ours)  & 73.9 $\pm$ 0.3 &  82.5 $\pm$ 0.9 & 62.1 $\pm$ 0.1 &  25.9 $\pm$ 0.2 &  61.1 \\
    \midrule
    \multicolumn{6}{c}{ERM + EMA} \\
    \midrule
    Multi-domain                 &  78.8 $\pm$ 0.6      &    87.8 $\pm$ 0.3   &   70.5 $\pm$ 0.1     &   46.0 $\pm$  0.1     &        70.8        \\
    Single-domain                  & 64.2 $\pm$ 0.7 & 67.9 $\pm$ 1.1   & 58.2 $\pm$ 0.1  & -    &      -            \\
   \rowcolor{Gray} Data-free (ours)  & 79.9 $\pm$ 0.6 & 86.9 $\pm$ 0.1 & 67.4 $\pm$ 0.2  &  30.3 $\pm$ 0.1  &  66.1 \\
    \bottomrule
    \end{tabular}}}
    \label{tab:data-free}
\end{table}

\subsection{Ablation Study and Analysis}
To fully understand the performance of our method, we perform an ablation study by first providing three baselines building upon ERM + EMA with minor modifications. First, we provide \textbf{larger batchsize} baseline, which is used to ablate the influence of larger batch sizes incurred by the additional augmentation data. Then, we provide \textbf{class template} baseline, which prompts the text-to-images generation model to generate synthetic images with the template "An image of [CLASS]". Then we will provide a third baseline, termed \textbf{class prompt} that will prompt LLMs to give a diffusion-style prompt (without explicitly instructing it to extrapolate novel domains) and use the generated prompts to query text-to-image models for synthetic data. Comparison is shown in Table \ref{tab:comparison_aug}. We can see that a larger batch size in fact has a negative effect while both template and prompt baseline underperform our method. This ablates the influence brought by text-to-image models and further underscores the importance of LLMs' knowledge regarding the novel domain.

\textbf{Comparison between different knowledge extraction.} We provide three approaches to extract knowledge regarding the novel domains of particular classes. Comparison can be seen in (b) of Figure 4, where we show that, overall, class-wise combined with LLM-generated prompt leads to better performance than class-wise query only and data-wise query. This is because class-wise query provides more plausible and reasonable novel domains given some class and LLM-generated prompt further extracts knowledge regarding this novel domain and increases diversity in generation. 

\textbf{Scaling to larger synthetic dataset.} It has been widely reported that data generated by generation models negatively impacts the model, especially when the number of synthetic images grows at scale \cite{he2022synthetic, azizi2023synthetic}. To this end, we investigate whether the performance increases scales with more synthetic data from more extrapolated novel domains. We perform scaling by prompting LLMs to extrapolate more novel domains and generate 64 image per domain. We can see in Figure 3
that with more domains (larger $n$ in Section \ref{sec:theory}), performance keeps increasing, which is consistent with our theoretical framework. We also make a comparison with class-template and class-prompt baselines and scale the two baselines by increasing the synthetic images to the corresponding size. However, these two methods both suffer from performance saturation and degradation when synthetic data increases, which is consistent with previous studies \cite{he2022synthetic, azizi2023synthetic}. This demonstrated that our method can scale better to larger sizes of synthetic data and underscore the importance of new knowledge injected by LLMs that benefits generalization.
\begin{minipage}{0.5\textwidth}
    \centering
  \scalebox{0.85}{
  \setlength{\tabcolsep}{1.2mm}{
  \begin{tabular}{cc}
       variance measure on & PACS \\
       \midrule
        LLMs extrapolation & 89.87 $\pm$ 0.4 \\
        text-to-image generation & 89.72 $\pm$ 0.2 \\
        model training & 90.3 $\pm$ 0.4 \\
         \bottomrule
    \end{tabular}}}
    \label{tab:variance_analysis}
    \captionof{table}{Variance analysis over the three modules to measure how stable our method performs.}
    
  \centering
  \includegraphics[width=0.7\textwidth]{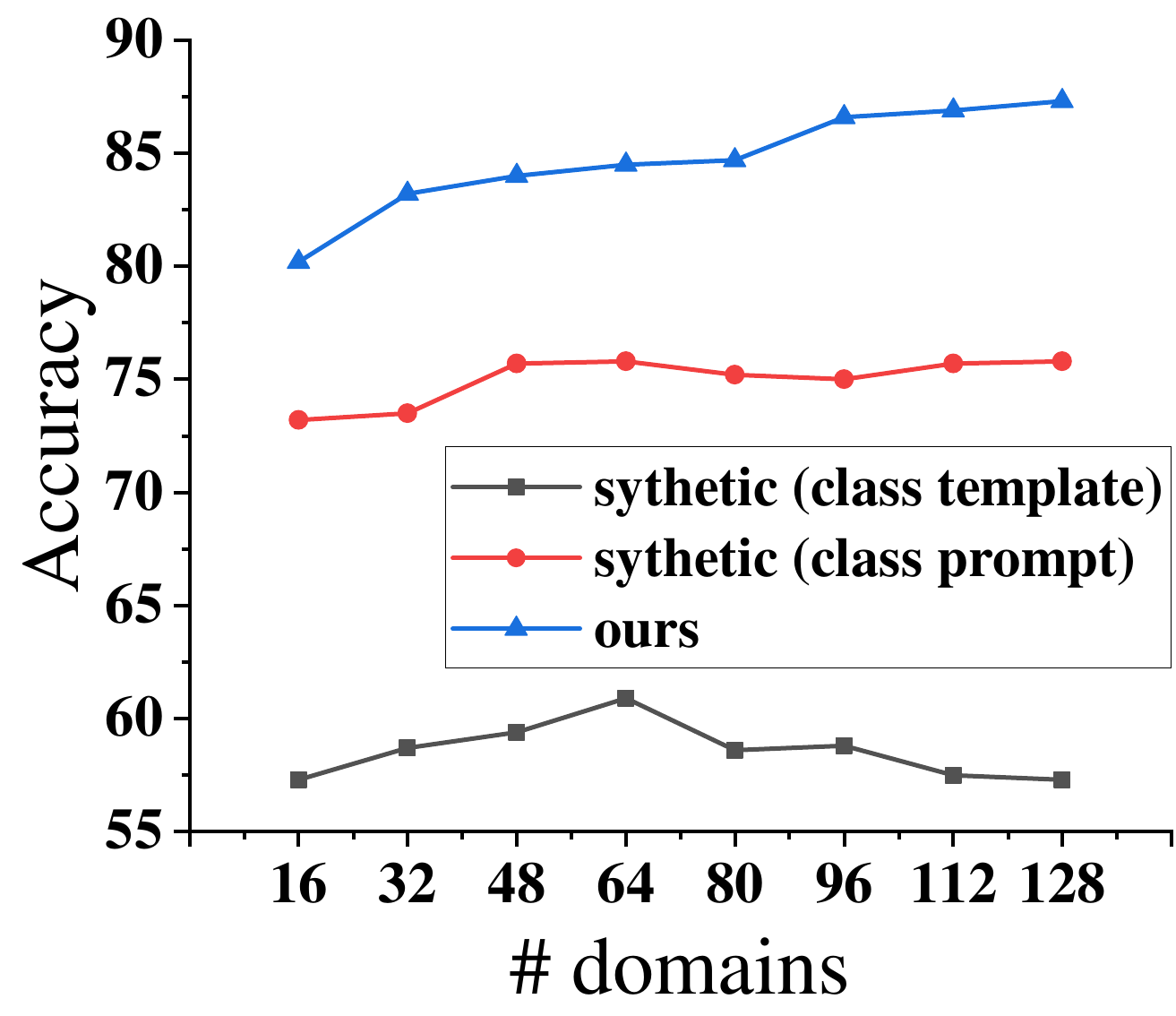} %
  \label{fig:scale}
 \captionof{figure}{Scaling the training dataset by adding more novel domains. Each novel domain consists of 64 images. To facilitate fair comparison, we scale the class template method by the same amount of images.}
\end{minipage}
\hfill 
\begin{minipage}{0.45\textwidth}
  \centering
  \includegraphics[width=0.85\textwidth]{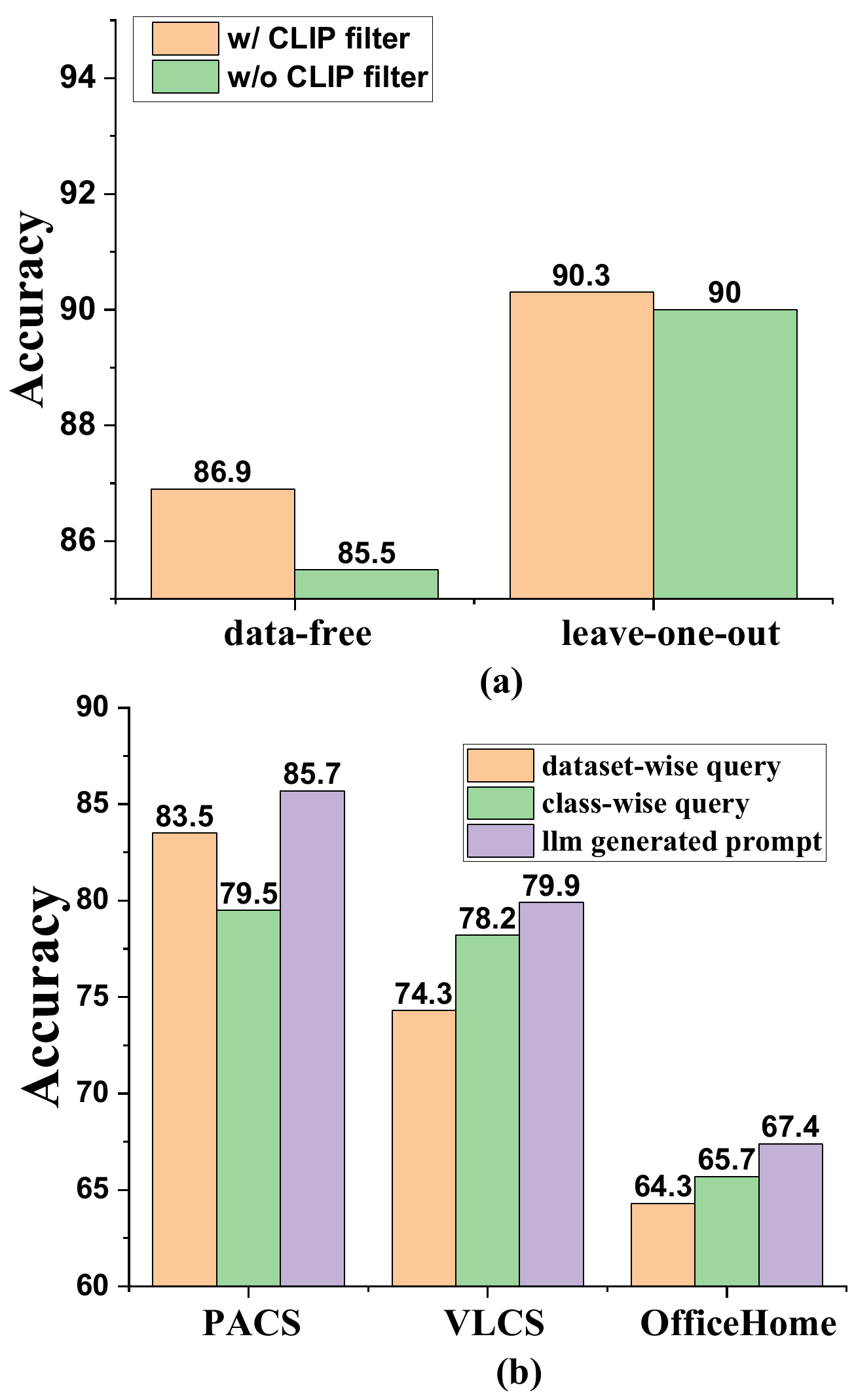} %
  \label{fig:filter_query}
 \captionof{figure}{(a) Effectiveness of CLIP filtering. (b) Comparison between different knowledge extraction methods.}
\end{minipage}
\textbf{Variance Analysis.} We aim to measure how stable our method is by decomposing the variance into three parts, i.e. LLMs extrapolation, text-to-image generation and final model training. We repeat each experiment three times and report the average and standard deviation in Table 5. For instance, to conduct variance anlysis on text-to-image generation, we use the same set of novel domains generated by LLMs, can generate synthetic datasets with the same text-to-image model three times. As per the table, we can see that all three parts contribute to a relatively small variance, suggesting that our method is stable.

\textbf{Additional CLIP filtering.} Text-to-image generation models are essentially noisy and might generate images of distortion or without the main class of interest. We experiment with CLIP filtering before the training process. As shown in (a) of Figure 4, we can observe an increase with additional filtering techniques by 1 \%. To further illustrate the effectiveness of filtering, we visualize some filtered failure cases in Appendix \ref{appendix:problem_graph}.

\textbf{Different LLMs.} To make sure that our method does not reply on specific LLMs, i.e. ChatGPT-4, we conduct experiments with LLMs from different families, e.g Llama and Mixtral in table \label{tab:diff_llms}. 

\begin{table}[h]
    \centering
    \setlength{\tabcolsep}{2mm}{
  \scalebox{1.0}{
  \begin{tabular}{cccccc}
        \toprule
       LLM & A &  C & P & S & Avg \\
       \midrule
       GPT-4 &  94.4 $\pm$ 0.2       &        85.0 $\pm$ 0.5         &      98.5 $\pm$ 0.1       &        83.3 $\pm$ 1.7        &       90.3 \\
       Llama-13B & 92.6 $\pm$ 0.5    &    83.2 $\pm$ 0.5    &   98.2 $\pm$ 0.1   &      80.9 $\pm$ 0.7    &       88.7 \\
       Llama-70B & 93.0 $\pm$ 0.4    &         83.6 $\pm$ 0.4      &       98.5 $\pm$ 0.2      &       81.9 $\pm$ 0.4      &       89.3 \\
       Mixtral-8x7B & 92.4 $\pm$ 0.0   & 84.6 $\pm$ 0.3   &   98.8 $\pm$ 0.0  &        81.1 $\pm$ 0.6   &   89.2 \\
         \bottomrule
    \end{tabular}}}
    \label{tab:diff_llms}
    \caption{Performance with different LLMs. We experiment with GPT4, Llama-13B, Llama-70B and Mixtral-8x7B models.}
\end{table}

\textbf{Visualization.} We provide visualization to validate that our method do extrapolate novel domains and generate the desired class. We demonstrate generated images from three different novel domains of the PACS dataset in the last four columns of Figure \ref{fig:vis_pacs} and compare them with the real images in the PACS dataset (first two columns). We can see that the generated novel domains are by no means an interpolation of the real domains and are different from the existing training domains by a large margin. This illustrates that our method takes one step further toward "truly" extrapolation of novel domains without human labor. We provide more visualization in the Appendix.

\begin{figure}[htbp] 
\centering
\includegraphics[width=0.9\textwidth]{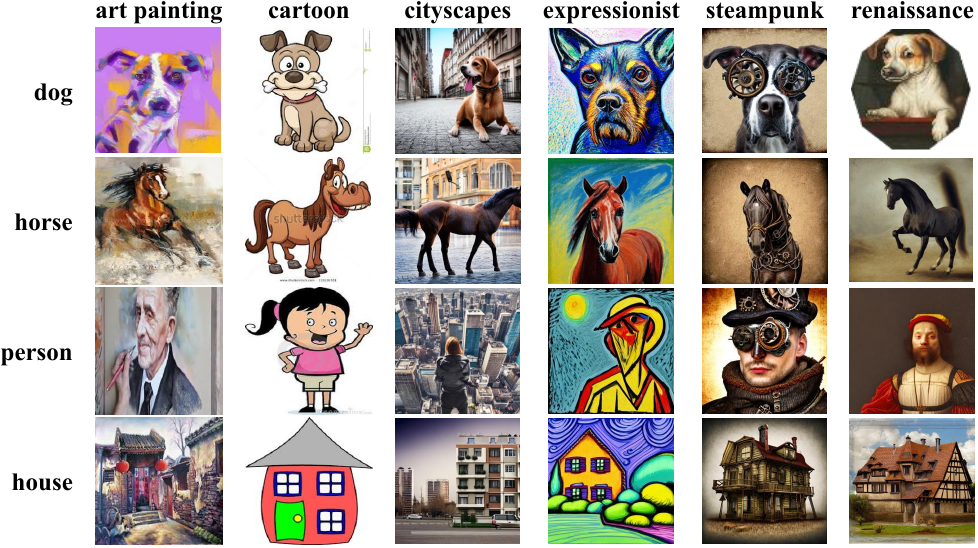}
\caption{Examples of synthetic images conditioned on novel domain knowledge from LLM. The first two columns (i.e. art painting and cartoon) are selected from PACS datasets while the rest four columns are images generated based on the novel domains (i.e. cityscapes, etc) provided by LLMs.}
\label{fig:vis_pacs} 
\end{figure}
\section{Related Work}
\label{sec:related}
\textbf{Domain Generalization. } Various approaches have been proposed to solve this problem, such as domain alignment \cite{li2018domain, li2018deep}, meta-learning \cite{li2018learning, balaji2018metareg}, ensemble learning \cite{cha2021domain, arpit2022ensemble} and augmentation-based \cite{zhou2023fedfa, zhou2021domain, li2022uncertainty, xu2020robust, zhou2020learning, albuquerque2019adversarial}. Augmentation-based methods are closely related to this work, both with the intention of generating more source domains to approximate the expected generalization error. However, these methods resort to interpolation of existing domains and fail to extrapolate the "truly" novel domains. For instance, MixStyle \cite{zhou2021domain} mixes the statistics of two samples by linear interpolation. More recently, with the advent of vision-language models such as CLIP \cite{radford2021learning}  and Stable Diffusion \cite{rombach2021highresolution}, researchers propose to utilize Stable Diffusion to identify and cure shortcuts \cite{wu2023discover} or CLIP to generate novel domain augmentation \cite{vidit2023clip}. However, they all require some form of human labor to pre-define a set of domains or styles, which makes them laborious and not scalable. Our work aims to solve this problem and achieve genuine domain extrapolation.

\textbf{Language scaffolded vision} aims to develop better and more robust vision systems with the help of language. Our method also falls within this category. Clipood \cite{shu2023clipood} proposes to fine-tune a CLIP model to adapt the downstream DG tasks by a text similarity aware loss. \cite{min2022grounding} utilize an RNN as an explanation network enforcing the model to self-explain, thereby increasing the robustness. \cite{yang2023language} utilize language models to produce a comprehensive set of bottleneck features and leverage CLIP to classify. With the help from LLMs, \cite{yang2023language} has pushed the performance of the bottleneck network to SOTA. Despite many works proposed, this research, to the best of our knowledge, is the first endeavor to investigate the potential of a Large Language Model (LLM) in facilitating the training of a robust and generalizable vision model. 

\textbf{Large Language Models.} Recent advances in NLP, as evidenced by ~\cite{brown2020language, ouyang2022training}) highlight the impressive capabilities of Large Language Models like ChatGPT, GPT4~\cite{brown2020language}, and Llama 2~\cite{touvron2023llama}. These models glean diverse knowledge from vast training data sourced from the Internet, positioning LLMs as next-generation knowledge bases for various tasks. Motivated by studies showcasing the vast knowledge~\cite{alivanistos2022prompting, petroni-etal-2019-language} and the exceptional reasoning ability~\cite{huang-chang-2023-towards, qiao-etal-2023-reasoning, NEURIPS2022_9d560961} within LLMs, we aim to harness this knowledge for the training of robust vision models.

\section{Conclusion}
\label{sec:conclusion}
The limited availability of domains has been a prevailing problem in Domain Generalization. In this work, we propose the first data-free learning paradigm that leverages the knowledge and reasoning of LLMs to extrapolate novel domains. By bridging the text-centric knowledge and pixel input space by sampling from text-to-image generation models, we are able to train generalizable models with task information only. The synthetic images can be used to augment the existing dataset or train a model in a data-free fashion. Extensive experiments have demonstrated that our method achieves significant improvements over baselines and the state-of-the-art by a significant margin. We also demonstrate a promising learning paradigm where LLMs' knowledge combined with text-to-image generation models are sufficient to train a generalizable model to any task. However, it's important to acknowledge the inherent biases present in foundational models like LLMs and text-to-image generators, which our vision models may inherit. This includes a bias towards common object representations, exacerbating the long-tail distribution challenge by privileging common entities in generation processes. Furthermore, our method faces limitations in domain specificity, with current text-to-image models excelling in generating natural images but underperforming in specialized fields such as medical imaging, highlighting a critical area for future improvement and adaptation.

%
%

\bibliographystyle{splncs04}
\bibliography{main}

\begin{thebibliography}{10}
\providecommand{\url}[1]{\texttt{#1}}
\providecommand{\urlprefix}{URL }
\providecommand{\doi}[1]{https://doi.org/#1}

\bibitem{albuquerque2019adversarial}
Albuquerque, I., Monteiro, J., Falk, T.H., Mitliagkas, I.: Adversarial target-invariant representation learning for domain generalization. arXiv preprint arXiv:1911.00804  \textbf{8} (2019)

\bibitem{alivanistos2022prompting}
Alivanistos, D., Santamar{\'\i}a, S.B., Cochez, M., Kalo, J.C., van Krieken, E., Thanapalasingam, T.: Prompting as probing: Using language models for knowledge base construction. arXiv preprint arXiv:2208.11057  (2022)

\bibitem{arjovsky2019invariant}
Arjovsky, M., Bottou, L., Gulrajani, I., Lopez-Paz, D.: Invariant risk minimization. arXiv preprint arXiv:1907.02893  (2019)

\bibitem{arpit2022ensemble}
Arpit, D., Wang, H., Zhou, Y., Xiong, C.: Ensemble of averages: Improving model selection and boosting performance in domain generalization. Advances in Neural Information Processing Systems  \textbf{35},  8265--8277 (2022)

\bibitem{azizi2023synthetic}
Azizi, S., Kornblith, S., Saharia, C., Norouzi, M., Fleet, D.J.: Synthetic data from diffusion models improves imagenet classification. arXiv preprint arXiv:2304.08466  (2023)

\bibitem{balaji2018metareg}
Balaji, Y., Sankaranarayanan, S., Chellappa, R.: Metareg: Towards domain generalization using meta-regularization. Advances in neural information processing systems  \textbf{31} (2018)

\bibitem{ben2010theory}
Ben-David, S., Blitzer, J., Crammer, K., Kulesza, A., Pereira, F., Vaughan, J.W.: A theory of learning from different domains. Machine learning  \textbf{79},  151--175 (2010)

\bibitem{blanchard2011generalizing}
Blanchard, G., Lee, G., Scott, C.: Generalizing from several related classification tasks to a new unlabeled sample. Advances in neural information processing systems  \textbf{24} (2011)

\bibitem{brown2020language}
Brown, T., Mann, B., Ryder, N., Subbiah, M., Kaplan, J.D., Dhariwal, P., Neelakantan, A., Shyam, P., Sastry, G., Askell, A., et~al.: Language models are few-shot learners. Advances in neural information processing systems  \textbf{33},  1877--1901 (2020)

\bibitem{cha2021domain}
Cha, J., Cho, H., Lee, K., Park, S., Lee, Y., Park, S.: Domain generalization needs stochastic weight averaging for robustness on domain shifts. arXiv preprint arXiv:2102.08604  \textbf{3}, ~3 (2021)

\bibitem{cha2021swad}
Cha, J., Chun, S., Lee, K., Cho, H.C., Park, S., Lee, Y., Park, S.: Swad: Domain generalization by seeking flat minima. Advances in Neural Information Processing Systems  \textbf{34},  22405--22418 (2021)

\bibitem{cha2022domain}
Cha, J., Lee, K., Park, S., Chun, S.: Domain generalization by mutual-information regularization with pre-trained models. In: European Conference on Computer Vision. pp. 440--457. Springer (2022)

\bibitem{chen2021transunet}
Chen, J., Lu, Y., Yu, Q., Luo, X., Adeli, E., Wang, Y., Lu, L., Yuille, A.L., Zhou, Y.: Transunet: Transformers make strong encoders for medical image segmentation. arXiv preprint arXiv: 2102.04306  (2021)

\bibitem{choi2023progressive}
Choi, S., Das, D., Choi, S., Yang, S., Park, H., Yun, S.: Progressive random convolutions for single domain generalization. In: Proceedings of the IEEE/CVF Conference on Computer Vision and Pattern Recognition. pp. 10312--10322 (2023)

\bibitem{cubuk2018autoaugment}
Cubuk, E.D., Zoph, B., Mane, D., Vasudevan, V., Le, Q.V.: Autoaugment: Learning augmentation policies from data. arXiv preprint arXiv:1805.09501  (2018)

\bibitem{cubuk2020randaugment}
Cubuk, E.D., Zoph, B., Shlens, J., Le, Q.V.: Randaugment: Practical automated data augmentation with a reduced search space. In: Proceedings of the IEEE/CVF conference on computer vision and pattern recognition workshops. pp. 702--703 (2020)

\bibitem{devlin2018bert}
Devlin, J., Chang, M.W., Lee, K., Toutanova, K.: Bert: Pre-training of deep bidirectional transformers for language understanding. arXiv preprint arXiv: 1810.04805  (2018)

\bibitem{dosovitskiy2020image}
Dosovitskiy, A., Beyer, L., Kolesnikov, A., Weissenborn, D., Zhai, X., Unterthiner, T., Dehghani, M., Minderer, M., Heigold, G., Gelly, S., Uszkoreit, J., Houlsby, N.: An image is worth 16x16 words: Transformers for image recognition at scale. ICLR  (2021)

\bibitem{fan2021adversarially}
Fan, X., Wang, Q., Ke, J., Yang, F., Gong, B., Zhou, M.: Adversarially adaptive normalization for single domain generalization. In: Proceedings of the IEEE/CVF conference on Computer Vision and Pattern Recognition. pp. 8208--8217 (2021)

\bibitem{gulrajani2020search}
Gulrajani, I., Lopez-Paz, D.: In search of lost domain generalization. arXiv preprint arXiv:2007.01434  (2020)

\bibitem{he2015deep}
He, K., Zhang, X., Ren, S., Sun, J.: Deep residual learning for image recognition. arXiv preprint arXiv: 1512.03385  (2015)

\bibitem{he2022synthetic}
He, R., Sun, S., Yu, X., Xue, C., Zhang, W., Torr, P., Bai, S., Qi, X.: Is synthetic data from generative models ready for image recognition? arXiv preprint arXiv:2210.07574  (2022)

\bibitem{huang-chang-2023-towards}
Huang, J., Chang, K.C.C.: Towards reasoning in large language models: A survey. In: Findings of the Association for Computational Linguistics: ACL 2023. pp. 1049--1065. Association for Computational Linguistics, Toronto, Canada (Jul 2023). \doi{10.18653/v1/2023.findings-acl.67}, \url{https://aclanthology.org/2023.findings-acl.67}

\bibitem{huang2022two}
Huang, Z., Wang, H., Huang, D., Lee, Y.J., Xing, E.P.: The two dimensions of worst-case training and their integrated effect for out-of-domain generalization. In: Proceedings of the IEEE/CVF Conference on Computer Vision and Pattern Recognition. pp. 9631--9641 (2022)

\bibitem{huang2020self}
Huang, Z., Wang, H., Xing, E.P., Huang, D.: Self-challenging improves cross-domain generalization. In: Computer Vision--ECCV 2020: 16th European Conference, Glasgow, UK, August 23--28, 2020, Proceedings, Part II 16. pp. 124--140. Springer (2020)

\bibitem{krueger2021out}
Krueger, D., Caballero, E., Jacobsen, J.H., Zhang, A., Binas, J., Zhang, D., Le~Priol, R., Courville, A.: Out-of-distribution generalization via risk extrapolation (rex). In: International Conference on Machine Learning. pp. 5815--5826. PMLR (2021)

\bibitem{li2022finding}
Li, D., Gouk, H., Hospedales, T.: Finding lost dg: Explaining domain generalization via model complexity. arXiv preprint arXiv:2202.00563  (2022)

\bibitem{li2018learning}
Li, D., Yang, Y., Song, Y.Z., Hospedales, T.: Learning to generalize: Meta-learning for domain generalization. In: Proceedings of the AAAI conference on artificial intelligence. vol.~32 (2018)

\bibitem{li2018domain}
Li, H., Pan, S.J., Wang, S., Kot, A.C.: Domain generalization with adversarial feature learning. In: Proceedings of the IEEE conference on computer vision and pattern recognition. pp. 5400--5409 (2018)

\bibitem{li2022uncertainty}
Li, X., Dai, Y., Ge, Y., Liu, J., Shan, Y., Duan, L.Y.: Uncertainty modeling for out-of-distribution generalization. arXiv preprint arXiv:2202.03958  (2022)

\bibitem{li2018deep}
Li, Y., Tian, X., Gong, M., Liu, Y., Liu, T., Zhang, K., Tao, D.: Deep domain generalization via conditional invariant adversarial networks. In: Proceedings of the European conference on computer vision (ECCV). pp. 624--639 (2018)

\bibitem{li2021encoder}
Li, Y., Cai, W., Gao, Y., Li, C., Hu, X.: More than encoder: Introducing transformer decoder to upsample. arXiv preprint arXiv: 2106.10637  (2021)

\bibitem{liu2021towards}
Liu, J., Shen, Z., He, Y., Zhang, X., Xu, R., Yu, H., Cui, P.: Towards out-of-distribution generalization: A survey. arXiv preprint arXiv:2108.13624  (2021)

\bibitem{min2022grounding}
Min, S., Park, N., Kim, S., Park, S., Kim, J.: Grounding visual representations with texts for domain generalization. In: European Conference on Computer Vision. pp. 37--53. Springer (2022)

\bibitem{mohri2018foundations}
Mohri, M., Rostamizadeh, A., Talwalkar, A.: Foundations of machine learning. MIT press (2018)

\bibitem{nichol2021glide}
Nichol, A., Dhariwal, P., Ramesh, A., Shyam, P., Mishkin, P., McGrew, B., Sutskever, I., Chen, M.: Glide: Towards photorealistic image generation and editing with text-guided diffusion models. arXiv preprint arXiv:2112.10741  (2021)

\bibitem{ouyang2022training}
Ouyang, L., Wu, J., Jiang, X., Almeida, D., Wainwright, C., Mishkin, P., Zhang, C., Agarwal, S., Slama, K., Ray, A., et~al.: Training language models to follow instructions with human feedback. Advances in Neural Information Processing Systems  \textbf{35},  27730--27744 (2022)

\bibitem{petroni-etal-2019-language}
Petroni, F., Rockt{\"a}schel, T., Riedel, S., Lewis, P., Bakhtin, A., Wu, Y., Miller, A.: Language models as knowledge bases? In: Proceedings of the 2019 Conference on Empirical Methods in Natural Language Processing and the 9th International Joint Conference on Natural Language Processing (EMNLP-IJCNLP). pp. 2463--2473. Association for Computational Linguistics, Hong Kong, China (Nov 2019). \doi{10.18653/v1/D19-1250}, \url{https://aclanthology.org/D19-1250}

\bibitem{qiao2020learning}
Qiao, F., Zhao, L., Peng, X.: Learning to learn single domain generalization. In: Proceedings of the IEEE/CVF Conference on Computer Vision and Pattern Recognition. pp. 12556--12565 (2020)

\bibitem{qiao-etal-2023-reasoning}
Qiao, S., Ou, Y., Zhang, N., Chen, X., Yao, Y., Deng, S., Tan, C., Huang, F., Chen, H.: Reasoning with language model prompting: A survey. In: Proceedings of the 61st Annual Meeting of the Association for Computational Linguistics (Volume 1: Long Papers). pp. 5368--5393. Association for Computational Linguistics, Toronto, Canada (Jul 2023). \doi{10.18653/v1/2023.acl-long.294}, \url{https://aclanthology.org/2023.acl-long.294}

\bibitem{radford2021learning}
Radford, A., Kim, J.W., Hallacy, C., Ramesh, A., Goh, G., Agarwal, S., Sastry, G., Askell, A., Mishkin, P., Clark, J., et~al.: Learning transferable visual models from natural language supervision. In: International conference on machine learning. pp. 8748--8763. PMLR (2021)

\bibitem{rombach2022high}
Rombach, R., Blattmann, A., Lorenz, D., Esser, P., Ommer, B.: High-resolution image synthesis with latent diffusion models. In: Proceedings of the IEEE/CVF conference on computer vision and pattern recognition. pp. 10684--10695 (2022)

\bibitem{Rombach_2022_CVPR}
Rombach, R., Blattmann, A., Lorenz, D., Esser, P., Ommer, B.: High-resolution image synthesis with latent diffusion models. In: Proceedings of the IEEE/CVF Conference on Computer Vision and Pattern Recognition (CVPR). pp. 10684--10695 (June 2022)

\bibitem{rombach2021highresolution}
Rombach, R., Blattmann, A., Lorenz, D., Esser, P., Ommer, B.: High-resolution image synthesis with latent diffusion models (2021)

\bibitem{sagawa2019distributionally}
Sagawa, S., Koh, P.W., Hashimoto, T.B., Liang, P.: Distributionally robust neural networks for group shifts: On the importance of regularization for worst-case generalization. arXiv preprint arXiv:1911.08731  (2019)

\bibitem{saharia2022photorealistic}
Saharia, C., Chan, W., Saxena, S., Li, L., Whang, J., Denton, E.L., Ghasemipour, K., Gontijo~Lopes, R., Karagol~Ayan, B., Salimans, T., et~al.: Photorealistic text-to-image diffusion models with deep language understanding. Advances in Neural Information Processing Systems  \textbf{35},  36479--36494 (2022)

\bibitem{shu2023clipood}
Shu, Y., Guo, X., Wu, J., Wang, X., Wang, J., Long, M.: Clipood: Generalizing clip to out-of-distributions. arXiv preprint arXiv:2302.00864  (2023)

\bibitem{sun2016deep}
Sun, B., Saenko, K.: Deep coral: Correlation alignment for deep domain adaptation. In: Computer Vision--ECCV 2016 Workshops: Amsterdam, The Netherlands, October 8-10 and 15-16, 2016, Proceedings, Part III 14. pp. 443--450. Springer (2016)

\bibitem{touvron2023llama}
Touvron, H., Martin, L., Stone, K., Albert, P., Almahairi, A., Babaei, Y., Bashlykov, N., Batra, S., Bhargava, P., Bhosale, S., et~al.: Llama 2: Open foundation and fine-tuned chat models. arXiv preprint arXiv:2307.09288  (2023)

\bibitem{vapnik1998statistical}
Vapnik, V.: Statistical learning theory j wiley new york  (1998)

\bibitem{vidit2023clip}
Vidit, V., Engilberge, M., Salzmann, M.: Clip the gap: A single domain generalization approach for object detection. In: Proceedings of the IEEE/CVF Conference on Computer Vision and Pattern Recognition. pp. 3219--3229 (2023)

\bibitem{wang2022generalizing}
Wang, J., Lan, C., Liu, C., Ouyang, Y., Qin, T., Lu, W., Chen, Y., Zeng, W., Yu, P.: Generalizing to unseen domains: A survey on domain generalization. IEEE Transactions on Knowledge and Data Engineering  (2022)

\bibitem{wang2020heterogeneous}
Wang, Y., Li, H., Kot, A.C.: Heterogeneous domain generalization via domain mixup. In: ICASSP 2020-2020 IEEE International Conference on Acoustics, Speech and Signal Processing (ICASSP). pp. 3622--3626. IEEE (2020)

\bibitem{wang2021learning}
Wang, Z., Luo, Y., Qiu, R., Huang, Z., Baktashmotlagh, M.: Learning to diversify for single domain generalization. In: Proceedings of the IEEE/CVF International Conference on Computer Vision. pp. 834--843 (2021)

\bibitem{NEURIPS2022_9d560961}
Wei, J., Wang, X., Schuurmans, D., Bosma, M., ichter, b., Xia, F., Chi, E., Le, Q.V., Zhou, D.: Chain-of-thought prompting elicits reasoning in large language models. In: Koyejo, S., Mohamed, S., Agarwal, A., Belgrave, D., Cho, K., Oh, A. (eds.) Advances in Neural Information Processing Systems. vol.~35, pp. 24824--24837. Curran Associates, Inc. (2022), \url{https://proceedings.neurips.cc/paper_files/paper/2022/file/9d5609613524ecf4f15af0f7b31abca4-Paper-Conference.pdf}

\bibitem{wu2023discover}
Wu, S., Yuksekgonul, M., Zhang, L., Zou, J.: Discover and cure: Concept-aware mitigation of spurious correlation. arXiv preprint arXiv:2305.00650  (2023)

\bibitem{xu2020adversarial}
Xu, M., Zhang, J., Ni, B., Li, T., Wang, C., Tian, Q., Zhang, W.: Adversarial domain adaptation with domain mixup. In: Proceedings of the AAAI conference on artificial intelligence. vol.~34, pp. 6502--6509 (2020)

\bibitem{xu2020robust}
Xu, Z., Liu, D., Yang, J., Raffel, C., Niethammer, M.: Robust and generalizable visual representation learning via random convolutions. arXiv preprint arXiv:2007.13003  (2020)

\bibitem{yan2020improve}
Yan, S., Song, H., Li, N., Zou, L., Ren, L.: Improve unsupervised domain adaptation with mixup training. arXiv preprint arXiv:2001.00677  (2020)

\bibitem{yang2023language}
Yang, Y., Panagopoulou, A., Zhou, S., Jin, D., Callison-Burch, C., Yatskar, M.: Language in a bottle: Language model guided concept bottlenecks for interpretable image classification. In: Proceedings of the IEEE/CVF Conference on Computer Vision and Pattern Recognition. pp. 19187--19197 (2023)

\bibitem{zhao2023novel}
Zhao, D., Qi, L., Shi, X., Shi, Y., Geng, X.: A novel cross-perturbation for single domain generalization. arXiv preprint arXiv:2308.00918  (2023)

\bibitem{zhou2020learning}
Zhou, K., Yang, Y., Hospedales, T., Xiang, T.: Learning to generate novel domains for domain generalization. In: Computer Vision--ECCV 2020: 16th European Conference, Glasgow, UK, August 23--28, 2020, Proceedings, Part XVI 16. pp. 561--578. Springer (2020)

\bibitem{zhou2021domain}
Zhou, K., Yang, Y., Qiao, Y., Xiang, T.: Domain generalization with mixstyle. arXiv preprint arXiv:2104.02008  (2021)

\bibitem{zhou2023fedfa}
Zhou, T., Konukoglu, E.: Fedfa: Federated feature augmentation. arXiv preprint arXiv:2301.12995  (2023)

\end{thebibliography}

\newpage
\appendix
\section{Proof of Theorem \ref{theorem1}}
\label{appendix:proof}

\textbf{Notation.} Let $\mathcal{X}$ denote the observation space and $\mathcal{Y}=\{1, -1\}$ the output space. Denote $P_{XY}$ as the joint probability of the joint space of $\mathcal{X} \times \mathcal{Y}$ and assume a meta distribution $\mu$ and n domains $P^{(1)}_{XY}, \cdots, P^{(i)}_{XY}, P^{(n)}_{XY}$ are i.i.d realizations from $\mu$. A decision function is a function $f \in \mathcal{F}: \mathcal{X} \rightarrow \mathcal{Y}$ predicts $\hat y_i = f(x_i)$. We denote $\mathit{l}: \mathcal{Y}\times \mathcal{Y}\rightarrow \mathbb{R}_{+}$ a loss function and define the generalization error of a decision function as
\begin{align}
    \small \mathcal{L}^\mu(f) = \mathbb{E}_{P_{XY} \sim \mu}\mathbb{E}_{(x, y)\sim P_{XY}}[\mathit{l}(f(x), y)]
    \label{eq:objective}
\end{align}
Since we have no access to $\mu$ and all the realizations $P^{(1)}_{XY}, \cdots, P^{(i)}_{XY}, P^{(n)}_{XY}$ but sampled images from these realizations, we can derive an empirical error:
\begin{align}
    \small \hat{\mathcal{L}}^\mu (f) = \sum_{i=1}^n \sum_{j=1}^m \mathit{l}(f(x_{ij}), y_{ij})
\end{align}
It's easy to see that when $n \rightarrow \infty, m \rightarrow \infty $, $\hat{\mathcal{L}}^\mu (f) $ converges to $\mathcal{L}^\mu(f)$, which gives the intuitive sense that increasing $m$ and $n$ gives us better-approximated solutions.

To prove Theorem \ref{theorem1}, we use a modified version of the standard empirical Rademacher complexity bound that weakens the i.i.d assumption to an independence assumption \cite{mohri2018foundations}.

\begin{theorem}
\label{theorem2}
    For distribution $P^{(1)}, \cdots, P^{(n)}$ independent sampled from meta-distribution $\mu$, and 1-Lipschitz loss $l(\cdot, \cdot)$ taking values in $[0, 1]$, the following holds with confidence at least $1 - \delta$,
    \begin{equation}
        \frac{1}{n} \sum_{j=1}^n \mathcal{L}_{P^{(j)}}(f) \le \frac{1}{n} \sum_{j=1}^n \mathcal{\hat L}_{P^{(j)}}(f) + 2\mathcal{R}_{mn}(\mathcal{F}) + 3\sqrt{\frac{\ln (2/\delta)}{2mn}}
    \end{equation}
    where $\mathcal{\hat L}_{P^{(j)}}(f)$ is losses on empirical set $S_{P^{(j)}}$ i.i.d. drawn from $P^{(j)}$.
\end{theorem}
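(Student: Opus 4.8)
The plan is to follow the classical symmetrization route for Rademacher complexity bounds (as in \cite{mohri2018foundations}), taking care that the $mn$ samples are independent but \emph{not} identically distributed, since $z_{ij} := (x_{ij}, y_{ij}) \sim P^{(j)}$ depends on the domain index $j$. Writing $g_f(z) = l(f(x), y)$, observe that the left-hand side is exactly the expectation of the pooled empirical mean, $\frac{1}{n}\sum_j \mathcal{L}_{P^{(j)}}(f) = \frac{1}{mn}\sum_{i,j}\E[g_f(z_{ij})]$, so the claim is a uniform bound on the gap between this expectation and the pooled empirical mean $\frac{1}{mn}\sum_{i,j} g_f(z_{ij})$. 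First I would set $\Phi(S) = \sup_{f\in\mathcal{F}} \big(\frac{1}{mn}\sum_{i,j}\E[g_f(z_{ij})] - \frac{1}{mn}\sum_{i,j} g_f(z_{ij})\big)$ and control $\Phi$ both in high probability and in expectation.

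Next I would apply the bounded-differences (McDiarmid) inequality to $\Phi$. Replacing a single sample $z_{ij}$ while holding the other $mn-1$ fixed perturbs $\Phi$ by at most $1/(mn)$, since $g_f \in [0,1]$ and each sample carries weight $1/(mn)$ in the pooled mean. The point to stress is that McDiarmid needs only mutual independence of the $mn$ coordinates, which holds here despite their non-identical laws; this is exactly where the i.i.d.\ hypothesis is relaxed to independence. Hence, with probability at least $1-\delta$, $\Phi(S) \le \E[\Phi(S)] + \sqrt{\ln(1/\delta)/(2mn)}$.

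The main step, and the one most sensitive to the non-i.i.d.\ structure, is the symmetrization inequality $\E[\Phi(S)] \le 2\mathcal{R}_{mn}(\mathcal{F})$. I would introduce a ghost sample drawn \emph{domain-wise}, $z'_{ij} \sim P^{(j)}$ independently, so that $\frac{1}{mn}\sum_{i,j}\E[g_f(z_{ij})] = \E_{S'}\big[\frac{1}{mn}\sum_{i,j} g_f(z'_{ij})\big]$; Jensen's inequality then pulls the supremum outside to give $\E[\Phi(S)] \le \E_{S,S'}\sup_f \frac{1}{mn}\sum_{i,j}(g_f(z'_{ij}) - g_f(z_{ij}))$. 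The crucial observation is that for each fixed pair $(i,j)$ the variables $z_{ij}$ and $z'_{ij}$ share the same distribution $P^{(j)}$ and are independent across $(i,j)$, so inserting Rademacher signs $\sigma_{ij}$ leaves the joint law invariant; this swap is justified within each domain and never requires matching samples across different domains. Splitting the two terms and using $|\sigma_{ij}|=1$ yields $\E[\Phi(S)] \le 2\,\E_{\sigma,S}\sup_f \frac{1}{mn}\sum_{i,j}\sigma_{ij} g_f(z_{ij})$, the Rademacher complexity of the loss-composed class, and since $l(\cdot,\cdot)$ is $1$-Lipschitz, Talagrand's contraction lemma strips the loss to leave $2\mathcal{R}_{mn}(\mathcal{F})$.

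Combining concentration and symmetrization gives the bound with the expected Rademacher complexity. The stated constant $3$ and the $\ln(2/\delta)$ arise when one passes to the data-dependent (empirical) Rademacher complexity: a second McDiarmid argument bounds $\mathcal{R}_{mn}(\mathcal{F})$ by its empirical counterpart up to $\sqrt{\ln(2/\delta)/(2mn)}$, the factor $2$ in front turns this into two deviation copies, and a union bound over the two concentration events replaces $\ln(1/\delta)$ by $\ln(2/\delta)$, producing the three identical deviation terms. I expect the only genuine obstacle to be the symmetrization step: one must check that the Rademacher swap is legitimate per-domain rather than invoking global exchangeability, which is precisely the modification over the textbook i.i.d.\ proof.
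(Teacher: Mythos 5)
Your proposal follows essentially the same route as the paper's proof: McDiarmid on the sup-gap $\Phi(S)$ with bounded differences $1/(mn)$, domain-wise ghost-sample symmetrization to get $2\,\mathbb{E}[\mathcal{R}_{mn}(\mathcal{F})]$, a second McDiarmid application to pass to the empirical Rademacher complexity, and a union bound producing the constant $3$ and the $\ln(2/\delta)$. Your explicit invocation of Talagrand's contraction to strip the $1$-Lipschitz loss is a point the paper leaves implicit, but it is a presentational refinement rather than a different argument.
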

\begin{proof}
Let $S = \cup_{i=1}^n S_{P^{(i)}}$ and 
\begin{equation}
\Phi (S) = \sup_{f \in \mathcal{F}} \frac{1}{n} \sum_{j=1}^n (\mathcal{L}_{P^{(j)}}(f) - \mathcal{\hat L}_{P^{(j)}}(f))
\end{equation}
which satisfies the bounded differences property required by McDiarmid’s inequality, which implies that with confidence at least $1 - \frac{1}{2}\delta$ that 
\begin{equation}
    \Phi (S) \le \mathbb{E}_{S_{P^{(1:n)}} \sim P^{(1:n)}} \left[ \Phi (S) \right] + \sqrt{\frac{\ln (2/\delta)}{2mn}}
\end{equation}

Then we can bound the expected value of $\Phi (S)$
\begin{align}
    \small
    & \mathbb{E}_{S_{P^{(1:n)}} \sim P^{(1:n)}} \left[ \Phi (S) \right] \\
    &= \mathbb{E}_{S_{P^{(1:n)}} \sim P^{(1:n)}} \left[ \sup_{f \in \mathcal{F}} \frac{1}{n} \sum_{j=1}^n (\mathcal{L}_{P^{(j)}}(f) - \mathcal{\hat L}_{P^{(j)}}(f)) \right] \\
    &= \mathbb{E}_{S_{P^{(1:n)}} \sim P^{(1:n)}} \left[ \sup_{f \in \mathcal{F}} \frac{1}{n} \sum_{j=1}^n \left( \mathbb{E}_{S_{P^{(j)}}' \sim P^{(j)}} \left[ \frac{1}{m}\sum_{i=1}^m l(f(x'_{ij}), y'_{ij} \right] - \frac{1}{m}\sum_{i=1}^m l(f(x_{ij}), y_{ij}) \right) \right] \\
    & \le \mathbb{E}_{S_{P^{(1:n)}} \sim P^{(1:n)}} \mathbb{E}_{S'_{P^{(1:n)}} \sim P^{(1:n)}} \left[ \sup_{f \in \mathcal{F}} \frac{1}{n} \sum_{j=1}^n  \frac{1}{m}\sum_{i=1}^m l(f(x'_{ij}), y'_{ij}) - l(f(x_{ij}), y_{ij}) \right]\\
    &= \mathbb{E}_{S_{P^{(1:n)}} \sim P^{(1:n)}} \mathbb{E}_{S_{P^{(1:n)}}' \sim P^{(1:n)}}  \mathbb{E}_\sigma \left[ \sup_{f \in \mathcal{F}} \frac{1}{n} \sum_{j=1}^n  \frac{1}{m}\sum_{i=1}^m \sigma_{ij}(f(x'_{ij}), y'_{ij}) - l(f(x_{ij}), y_{ij})  \right]\\
    &\le \mathbb{E}_{S_{P^{(1:n)}}' \sim P^{(1:n)}} \mathbb{E}_\sigma \left[ \sup_{f \in \mathcal{F}} \frac{1}{n} \sum_{j=1}^n  \frac{1}{m}\sum_{i=1}^m \sigma_{ij}(f(x'_{ij}), y'_{ij})  \right] \\
    &\quad \quad \quad \quad + \mathbb{E}_{S_{P^{(1:n)}} \sim P^{(1:n)}} \mathbb{E}_\sigma \left[ \sup_{f \in \mathcal{F}} \frac{1}{n} \sum_{j=1}^n  \frac{1}{m}\sum_{i=1}^m -\sigma_{ij}(f(x_{ij}), y_{ij})  \right] \\
    &= 2 \mathbb{E}_{S_{P^{(1:n)}} \sim P^{(1:n)}} \mathbb{E}_\sigma \left[ \sup_{f \in \mathcal{F}} \frac{1}{n} \sum_{j=1}^n  \frac{1}{m}\sum_{i=1}^m \sigma_{ij} l(f(x_{ij}), y_{ij})  \right] \\
    &= 2 \mathbb{E}_{S_{P^{(1:n)}} \sim P^{(1:n)}} \left[ \mathcal{R}_{mn}(\mathcal{F})\right]
\end{align}
Following McDiarmid’s inequality, we know that with confidence at least $1 - \frac{1}{2}\delta$,
\begin{equation}
    2 \mathbb{E}_{S_{P^{(1:n)}} \sim P^{(1:n)}} \left[ \mathcal{R}_{mn}(\mathcal{F})\right] \le 2 \mathcal{R}_{mn}(\mathcal{F}) + 2\sqrt{\frac{\ln (2/\delta)}{2mn}}
\end{equation}
Finally, we have
\begin{align}
    \Phi (S) &= \sup_{f \in \mathcal{F}} \frac{1}{n} \sum_{j=1}^n (\mathcal{L}_{P^{(j)}}(f) - \mathcal{\hat L}_{P^{(j)}}(f))\\
    &\le \mathbb{E}_{S_{P^{(1:n)}} \sim P^{(1:n)}} \left[ \Phi (S) \right] + \sqrt{\frac{\ln (2/\delta)}{2mn}} \\
    &\le 2 \mathcal{R}_{mn}(\mathcal{F}) + 2\sqrt{\frac{\ln (2/\delta)}{2mn}} + \sqrt{\frac{\ln (2/\delta)}{2mn}} \\
    &= 2 \mathcal{R}_{mn}(\mathcal{F}) + 3\sqrt{\frac{\ln (2/\delta)}{2mn}}
\end{align}
Thus,
\begin{equation}
    \frac{1}{n} \sum_{j=1}^n \mathcal{L}_{P^{(j)}}(f) \le \frac{1}{n} \sum_{j=1}^n \mathcal{\hat L}_{P^{(j)}}(f) + 2\mathcal{R}_{mn}(\mathcal{F}) + 3\sqrt{\frac{\ln (2/\delta)}{2mn}}
\end{equation}
which completes the proof.
\end{proof}

Then we can derive the generalization bound with standard empirical Rademacher complexity bound \cite{li2022finding}.
\begin{theorem}
    
\label{theorem3}
    For a 1-Lipschitz loss $\mathit{l}$, with confidence at least $1 - 2\delta$ and for all $f \in \mathcal{F}$, we have
    \begin{align*}
    \small \mathcal{L}^\mu(f) \le \hat{\mathcal{L}}^\mu (f) + 2\mathcal{R}_{mn}(\mathcal{F}) + 2\mathcal{R}_{n}(\mathcal{F}) + 3\sqrt{\frac{\ln (2/\delta)}{2mn}} + 3\sqrt{\frac{\ln (2/\delta)}{n}}
    \end{align*}
    where $\mathcal{R}(\mathcal{F})$ standard empirical Rademacher complexity on function class $\mathcal{F}$.
\end{theorem}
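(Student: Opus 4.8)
The plan is to bound $\mathcal{L}^\mu(f)$ by inserting the empirical average of the per-domain population errors as an intermediate quantity, which splits the overall gap into a \emph{within-domain} part (sampling error inside each observed domain) and a \emph{between-domain} part (the $n$ observed domains versus the meta-distribution $\mu$). Concretely, I would write
\[
\mathcal{L}^\mu(f) = \underbrace{\Big(\mathcal{L}^\mu(f) - \tfrac{1}{n}\sum_{j=1}^n \mathcal{L}_{P^{(j)}}(f)\Big)}_{\text{between-domain gap}} + \underbrace{\tfrac{1}{n}\sum_{j=1}^n \mathcal{L}_{P^{(j)}}(f)}_{\text{controlled by Theorem~\ref{theorem2}}} .
\]
The second term is handled directly by Theorem~\ref{theorem2}, which with confidence $1-\delta$ gives $\tfrac{1}{n}\sum_{j=1}^n \mathcal{L}_{P^{(j)}}(f) \le \hat{\mathcal{L}}^\mu(f) + 2\mathcal{R}_{mn}(\mathcal{F}) + 3\sqrt{\ln(2/\delta)/(2mn)}$, where I identify $\hat{\mathcal{L}}^\mu(f) = \tfrac{1}{n}\sum_{j=1}^n \hat{\mathcal{L}}_{P^{(j)}}(f)$. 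Thus all remaining work concerns the between-domain gap.

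For the between-domain gap, I would observe that $\mathcal{L}^\mu(f) = \mathbb{E}_{P\sim\mu}[\mathcal{L}_P(f)]$, so this reduces to a one-level uniform-convergence problem in which each i.i.d.\ \emph{domain} $P^{(j)}$ plays the role of a single sample and the per-example quantity is the induced loss $g_f(P) := \mathcal{L}_P(f) \in [0,1]$. Applying the standard empirical Rademacher generalization bound (as in \cite{mohri2018foundations, li2022finding}) to the induced class $\mathcal{G} = \{g_f : f\in\mathcal{F}\}$ over the $n$ domains then yields, with confidence $1-\delta$,
\[
\mathcal{L}^\mu(f) - \tfrac{1}{n}\sum_{j=1}^n \mathcal{L}_{P^{(j)}}(f) \le 2\mathcal{R}_n(\mathcal{F}) + 3\sqrt{\frac{\ln(2/\delta)}{2n}} \le 2\mathcal{R}_n(\mathcal{F}) + 3\sqrt{\frac{\ln(2/\delta)}{n}},
\]
the final inequality being a deliberate loosening to match the stated form. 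Here the McDiarmid bounded-differences constant is $1/n$, since replacing one domain shifts the empirical average of a $[0,1]$-valued quantity by at most $1/n$, and the Talagrand contraction lemma applied to the $1$-Lipschitz loss collapses the Rademacher term of $\mathcal{G}$ to $\mathcal{R}_n(\mathcal{F})$.

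Finally, I would take a union bound over the two confidence-$\delta$ events and add the two displayed inequalities, obtaining the claim with confidence $1-2\delta$. The main obstacle I anticipate is the between-domain symmetrization: unlike an ordinary Rademacher argument, the per-domain quantity $g_f(P)=\mathcal{L}_P(f)$ is itself an \emph{expectation} over the unobserved within-domain distribution, so one must justify that symmetrizing and contracting at the level of whole distributions is legitimate and indeed reduces the induced complexity $\mathcal{R}_n(\mathcal{G})$ to $\mathcal{R}_n(\mathcal{F})$. Pinning down the precise meaning of $\mathcal{R}_n(\mathcal{F})$ at the domain level, and checking that the contraction step costs no extra Lipschitz factor, is where the care is required; tracking the constants in the McDiarmid terms is routine by comparison.
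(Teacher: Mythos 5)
Your proposal matches the paper's proof essentially step for step: the same decomposition into a between-domain gap (handled by treating each domain as a single $[0,1]$-valued sample, with McDiarmid constant $1/n$ and domain-level symmetrization yielding $2\mathcal{R}_n(\mathcal{F})$) plus the within-domain average controlled by Theorem~\ref{theorem2}, followed by a union bound over the two confidence-$\delta$ events. You even correctly identify the loosening of $\sqrt{\ln(2/\delta)/(2n)}$ to $\sqrt{\ln(2/\delta)/n}$ in the stated bound, and the domain-level symmetrization subtlety you flag is one the paper itself passes over without further justification.
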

Now we show that both the number of domains $n$ and the number of images observed from each domain $m$ is negatively correlated to the upper bound of generalization error. 

\begin{proof}
    Let $P = \{p^{(1)}, \cdots, p^{(n)} \}$ be a set of n domain distribution i.i.d. sampled from $\epsilon$. Define 
    \begin{equation}
        \Phi(P) = \sup_{f \in \mathcal{F}} \mathcal{L}^\epsilon (f) - \frac{1}{n}\sum_{j=1}^n \mathcal{L}_{p^{(j)}}(f)
    \end{equation}
    We construct $P'$ by replacing any $p^{(j)} \in P$ with $p' \sim \mu$, then we have $| \Phi(P) - \Phi(P') | \le 1/n$. Thus, McDiarmid’s inequality
    tells us that with confidence at least $1 - \frac{1}{2}\delta$ 
    \begin{equation}
         \Phi(P) \le \mathbb{E}_{P^{(1:n)} \sim \mu} \left[ \Phi(P) \right] + \sqrt{\frac{\ln (2/\delta)}{2n}}
    \end{equation}
    Following the proof techniques in Theorem \ref{theorem2}, we bound the expected value of  $\Phi(P)$
    \begin{align}
        &\mathbb{E}_{P^{(1:n)} \sim \mu} \left[ \Phi(P) \right] \\
        &\quad \quad \quad = \mathbb{E}_{P^{(1:n)} \sim \mu} \left[ \sup_{f \in \mathcal{F}} \left( \mathbb{E}_{q \sim \mu}[\mathcal{L}_q (f)] - \frac{1}{n} \sum_{j=1}^n \mathcal{L}_{p^{(j)}} (f) \right) \right] \\
        &\le 2 \mathbb{E}_{P^{(1:n)} \sim \mu} \mathbb{E}_{(x_j, y_j) \sim p^{(j)}} \left[ \mathcal{R}_{n}(\mathcal{F}) \right] 
    \end{align}
     McDiarmid’s inequality can be used to say with confidence $1 - \frac{1}{2} \delta$ that 
     \begin{equation}
         2 \mathbb{E}_{P^{(1:n)} \sim \mu} \mathbb{E}_{(x_j, y_j) \sim p^{(j)}} \left[ \mathcal{R}_{n}(\mathcal{F}) \right]  \le 2\mathcal{R}_{n}(\mathcal{F}) + 2\sqrt{\frac{\ln (2/\delta)}{2n}}
     \end{equation}
     Thus, we have 
     \begin{align}
         \Phi(P) &= \sup_{f \in \mathcal{F}} \mathcal{L}^\epsilon (f) - \frac{1}{n}\sum_{j=1}^n \mathcal{L}_{p^{(j)}}(f) \\
         &\le \mathbb{E}_{P^{(1:n)} \sim \mu} \left[ \Phi(P) \right] + \sqrt{\frac{\ln (2/\delta)}{2n}} \\
         &\le 2\mathcal{R}_{n}(\mathcal{F}) + 3\sqrt{\frac{\ln (2/\delta)}{2n}}
     \end{align}
     With Theorem \ref{theorem2}, we have with confidence at least $1 - \delta$ that,
     \begin{equation}
         \frac{1}{n} \sum_{j=1}^n \mathcal{L}_{p^{(j)}}(f) \le \mathcal{\hat L}^{\mu}(f) + 2\mathcal{R}_{nm}(\mathcal{F}) + 3\sqrt{\frac{\ln (2/\delta)}{2nm}}
     \end{equation}
     Finally, we have
     \begin{align}
         \sup_{f \in \mathcal{F}} \mathcal{L}^\epsilon (f) \le \frac{1}{n}\sum_{j=1}^n \mathcal{L}_{p^{(j)}}(f) + 2\mathcal{R}_{n}(\mathcal{F}) + 3\sqrt{\frac{\ln (2/\delta)}{2n}} \\
         \le 2\mathcal{R}_{nm}(\mathcal{F}) + 3\sqrt{\frac{\ln (2/\delta)}{2nm}} + 2\mathcal{R}_{n}(\mathcal{F}) + 3\sqrt{\frac{\ln (2/\delta)}{2n}}
     \end{align}
     which completes the proof.
\end{proof}

Then we prove our Theorem \ref{theorem1}. 
\begin{proof}
    With confidence at least $1 - 2\delta$ and for all $f \in \mathcal{F}$, we have
    \begin{align}
    \mathcal{L}^\mu(f) - \hat{\mathcal{L}}^{\mu'} (f) &=  \mathcal{L}^\mu(f) - \mathcal{L}^{\mu'}(f) + \mathcal{L}^{\mu'}(f) - \hat{\mathcal{L}}^{\mu'}(f)
    \end{align}
    With Theorem \ref{theorem3}, we have
    \begin{align}
        \mathcal{L}^\mu(f) &- \mathcal{L}^{\mu'}(f) + \mathcal{L}^{\mu'}(f) - \hat{\mathcal{L}}^{\mu'}(f) \\ &\le 
        2\mathcal{R}_{mn}(\mathcal{F}) + 2\mathcal{R}_{n}(\mathcal{F}) + 3\sqrt{\frac{\ln (2/\delta)}{2mn}} + 3\sqrt{\frac{\ln (2/\delta)}{n}} + \mathcal{L}^\mu(f) - \mathcal{L}^{\mu'}\\
        &\le 2\mathcal{R}_{mn}(\mathcal{F}) + 2\mathcal{R}_{n}(\mathcal{F}) + 3\sqrt{\frac{\ln (2/\delta)}{2mn}} + 3\sqrt{\frac{\ln (2/\delta)}{n}} + \sup_f |\mathcal{L}^\mu(f) - \mathcal{L}^{\mu'}|
    \end{align}
    With the assumption that $D(\mu, \mu') = \sup_f |\mathcal{L}^\mu(f) - \mathcal{L}^{\mu'}| \le \epsilon$, we have
    \begin{align}
        \mathcal{L}^\mu(f) &- \hat{\mathcal{L}}^{\mu'} (f) \\ &\le 2\mathcal{R}_{mn}(\mathcal{F}) + 2\mathcal{R}_{n}(\mathcal{F}) + 3\sqrt{\frac{\ln (2/\delta)}{2mn}} + 3\sqrt{\frac{\ln (2/\delta)}{n}} + \epsilon
    \end{align}
    which finishes the proof.
\end{proof}

\section{Visualization}
We provide more examples of synthetic images conditioned on novel domain knowledge from LLM. We present in Figure \ref{fig:vis_vlcs} the synthetic images of VLCS datasets. 
\begin{figure}
    \centering
    \includegraphics[width=0.9\textwidth]{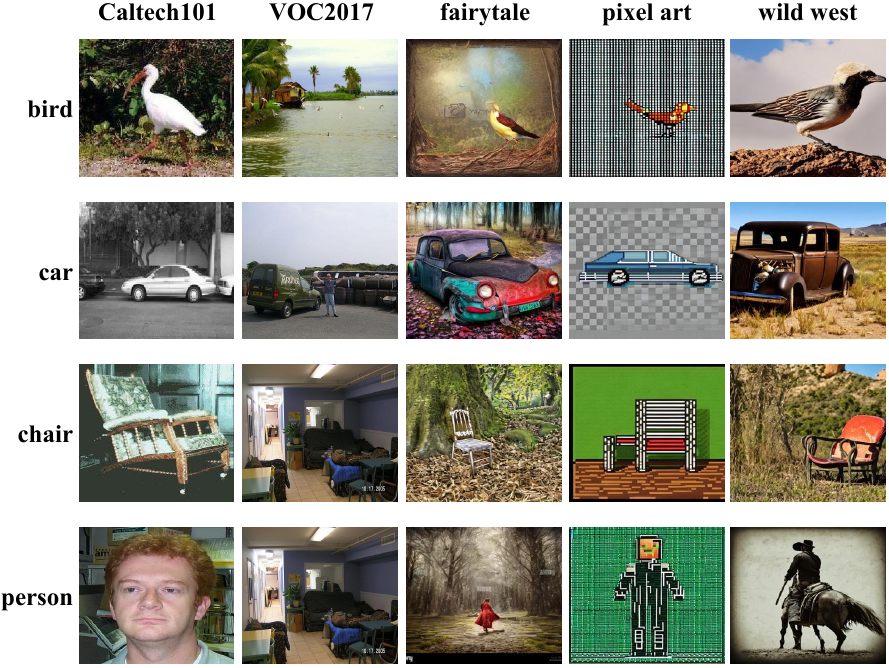}
    \caption{Examples of synthetic images conditioned on novel domain knowledge from LLM. The first two columns (i.e. Caltech101 and VOC2017) are selected from VLCS datasets while the rest three columns are images generated based on the novel domains (i.e. fairytale, etc) provided by LLMs}
    \label{fig:vis_vlcs}
\end{figure}

\section{Pitfall of Text-to-image Generation Models}
\label{appendix:problem_graph}
Text-to-image generation models are by nature noisy as no strict control can be achieved. We present some pitfalls (commonly reported by the community) that will insert noise and influence the training of a generalizable model. We show in Figure \ref{fig:vis_pitfall} where each row is a type of problem and below each image is the corresponding class and domain.
\begin{figure}
    \centering
    \includegraphics[width=0.9\textwidth]{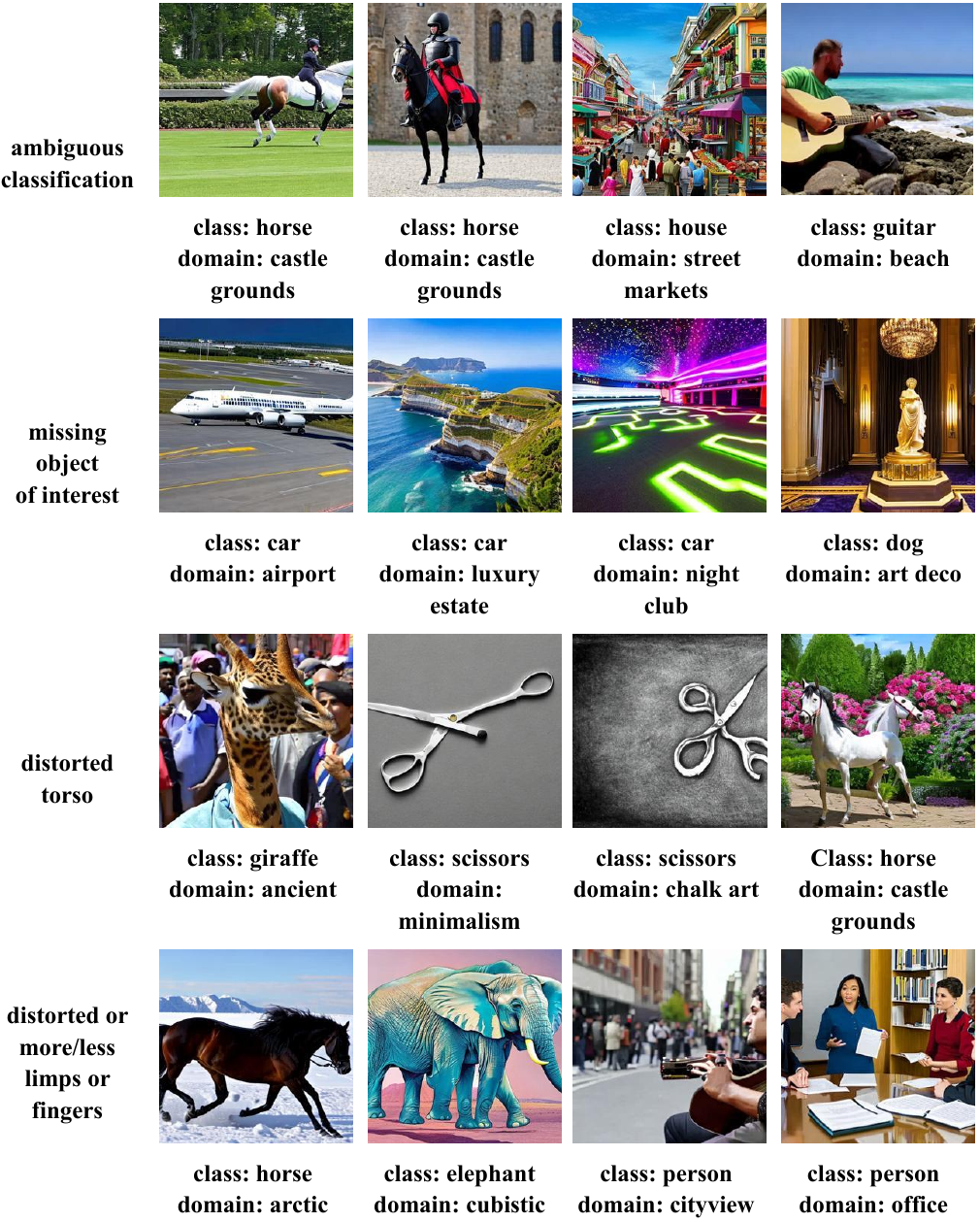}
    \caption{Examples of pitfalls of synthetic images.}
    \label{fig:vis_pitfall}
\end{figure}

\end{document}